
\documentclass{article}

\usepackage{microtype}
\usepackage{graphicx}
\usepackage{subfigure}
\usepackage{booktabs} 

\usepackage{hyperref}
\hypersetup{
   colorlinks=true,
   citecolor=cyan
}


\usepackage[table]{xcolor}
\usepackage[accepted]{icml2023}


\usepackage[export]{adjustbox}

\usepackage{amsmath}
\usepackage{amssymb}
\usepackage{mathtools}
\usepackage{amsthm}
\usepackage{thm-restate}

\usepackage{tcolorbox}


\theoremstyle{plain}
\theoremstyle{definition}
\newtheorem{theorem}{Theorem}[section]
\newtheorem{proposition}[theorem]{Proposition}
\newtheorem{lemma}[theorem]{Lemma}
\newtheorem{corollary}[theorem]{Corollary}
\theoremstyle{definition}

\newtheorem{remark}[theorem]{Remark}

\usepackage{mathrsfs}
\usepackage{enumitem}
\usepackage{nicefrac}
\usepackage{bbm}
\usepackage[framemethod=TikZ]{mdframed}
\mdfdefinestyle{box}{%
backgroundcolor=black!6,
roundcorner=10pt,
innertopmargin=8pt,
innerleftmargin=8pt}

\setlist[itemize,0]{leftmargin=0.5cm,itemsep=-0.1cm,topsep=-0.2cm}
\setlist[enumerate,0]{leftmargin=0.75cm,itemsep=-0.1cm,topsep=-0.2cm}

\definecolor{lightblue}{HTML}{89CFF0}

\DeclareMathSymbol{\indexminus}{\mathbin}{AMSa}{"39}
\usepackage{accents}
\newcommand{\ubar}[1]{\underaccent{\bar}{#1}}

\usepackage{xspace}

\newcommand{\papertitle}{The Statistical Benefits of Quantile Temporal-Difference Learning for Value Estimation}
\icmltitlerunning{\papertitle}

\begin{document}

\twocolumn[
\icmltitle{The Statistical Benefits \\ of Quantile Temporal-Difference Learning for Value Estimation}



\icmlsetsymbol{equal}{*}

\begin{icmlauthorlist}
\icmlauthor{Mark Rowland}{gdm}
\icmlauthor{Yunhao Tang}{gdm}
\icmlauthor{Clare Lyle}{gdm}
\icmlauthor{R\'emi Munos}{gdm}
\icmlauthor{Marc G. Bellemare}{gdm}
\icmlauthor{Will Dabney}{gdm}
\end{icmlauthorlist}

\icmlaffiliation{gdm}{Google DeepMind}

\icmlcorrespondingauthor{Mark Rowland}{markrowland@deepmind.com}

\icmlkeywords{Reinforcement learning, distributional reinforcement learning, temporal-difference learning}

\vskip 0.3in
]



\printAffiliationsAndNotice{}  

\begin{abstract}
    We study the problem of temporal-difference-based policy evaluation in reinforcement learning. In particular, we analyse the use of a distributional reinforcement learning algorithm, quantile temporal-difference learning (QTD), for this task. 
    We reach the surprising conclusion that even if a practitioner has no interest in the return distribution beyond the mean, QTD (which learns predictions about the full distribution of returns) may offer performance superior to approaches such as classical TD learning, which predict only the mean return, even in the tabular setting. 
\end{abstract}

\section{Introduction}

Distributional approaches to reinforcement learning (RL) aim to learn the full probability distribution over random returns an agent may encounter, rather than just the expectation of the random return \citep{morimura2010nonparametric,morimura2010parametric,bellemare2017distributional,bdr2022}. These methods have seen recent empirical successes in domains such as stratospheric balloon navigation \citep{bellemare2020autonomous}, simulated race car control \citep{wurman2022outracing}, and algorithm discovery \citep{fawzi2022discovering}, as well as forming a core component of many successful agents in common simulated reinforcement learning benchmarks \citep{bellemare13arcade,machado2018revisiting,bellemare2017distributional,dabney2018distributional,dabney2018implicit,yang2019fully,vieillard2020munchausen,nguyen2021distributional}, often improving over agents that estimate only the expected return. Notably, the success of these distributional approaches has typically been observed in combination with deep neural networks, and it is commonly hypothesised that the benefits of the distributional approach stem from its interaction with non-linear function approximators such as deep neural networks \citep{bellemare2017distributional,imani2018improving,dabney2021value,sun2022does}, rather than for statistical reasons.

In this paper, however, we reach a surprising conclusion: 
Even in the tabular setting, there are many scenarios where \emph{quantile temporal-difference learning} \citep[QTD;][]{dabney2018distributional}, a distributional RL algorithm which aims to learn quantiles of the return distribution, can more accurately estimate the expected return than classical temporal-difference learning \citep[TD; ][]{sutton1984temporal,sutton1988learning} which predicts only the expected return.

To complement this core finding, we conduct novel theoretical analysis to establish what kinds of value predictions QTD converges to, and crucially how this depends on the number of quantiles that the algorithm estimates. We also examine how both TD and QTD trade-off between the variance of their updates, and their expected progress towards their asymptotic predictions. We find that when estimating a sufficient number of quantiles, QTD is able to converge to value predictions close to the true value function $V^\pi$, yet with individual updates that are guaranteed to be of bounded magnitude.

These insights lead to several testable hypotheses, which we use to conduct a further empirical study to better characterise domains in which QTD offers superior performance to TD, and vice versa, and find several common trends. In particular, we find that in environments with significant stochasticity, QTD often performs better (and in contrast, in (near-)deterministic environments, TD is clearly preferable), and that estimating a low number of quantiles may have adverse effects on the accuracy of QTD's predictions. By investigating a variant of QTD, we also find evidence that estimation specifically of the return distribution may lead to useful variance-reduction properties.

These findings have consequences for both theoreticians and practitioners. For the former, QTD represents a distinct fundamental approach to the problem of value prediction in RL, often with complementary performance to classical TD, and raises a range of open questions. For the latter, QTD for mean estimation can be considered as a plug-in alternative to TD, in tabular settings and beyond.

\section{Background}

We consider a Markov decision process, specified by a finite state space $\mathcal{X}$, action space $\mathcal{A}$, joint transition probabilities $P : \mathcal{X} \times \mathcal{A} \rightarrow \mathscr{P}(\mathcal{X} \times \mathbb{R})$ that specify for each $(x,a) \in \mathcal{X} \times \mathcal{A}$ a distribution over an immediate reward and next state,
and a discount factor $\gamma \in [0, 1)$.
When a policy $\pi : \mathcal{X} \rightarrow \mathscr{P}(\mathcal{A})$ for selecting actions at each state is specified, each choice of an initial state $x_0 \in \mathcal{X}$ gives rise to a probability distribution over the trajectory $(X_t, R_t)_{t \geq 0}$, which we refer to as a Markov reward process (MRP). We introduce the notation $\mathbb{P}^\pi_{x_0}$ for the distribution of the trajectory, and $\mathbb{E}^\pi_{x_0}$ for the corresponding expectation operator.

The discounted return (or simply, the return) obtained along the trajectory $(X_t, R_t)_{t \geq 0}$ is defined as
\begin{align}\label{eq:random-return}
    \textstyle
    \sum_{t \geq 0}  \gamma^t R_t \, ,
\end{align}
and encodes the utility of the trajectory to an agent interacting with the environment; higher returns are better.
The value function $V^\pi : \mathcal{X} \rightarrow \mathbb{R}$ for a policy $\pi$ is defined by
\begin{align*}
    \textstyle
    V^\pi(x) = \mathbb{E}^\pi_x\left[ \sum_{t \geq 0} \gamma^t R_t \right] \, ,
\end{align*}
for each $x \in \mathcal{X}$. That is, $V^\pi(x)$ is the mean return encountered along trajectories beginning at the state $x$. Estimating $V^\pi$ from observed interactions with the environments is a fundamental problem in reinforcement learning, as it allows agents to both predict the effects of their actions, and to improve their policies.

\subsection{Temporal-difference learning}\label{sec:td-background}

Temporal-difference learning \citep[TD;][]{sutton1984temporal,sutton1988learning} is a family of algorithms that aim to learn an estimate of the value function $V^\pi$. We focus here on the simplest variant, TD(0). This algorithm maintains an estimate $V$ of the value function, and incrementally updates these predictions in response to experience in the environment. 
Specifically, on observing a transition $(x, r, x')$ generated by $\pi$, TD learning selects a learning rate $\alpha\in (0, 1]$, and performs the assignment
\begin{align}\label{eq:td-update}
    V(x) \leftarrow V(x) + \alpha (r + \gamma V(x') - V(x)) \, ,
\end{align}
to update $V$. Under appropriate conditions, the estimate $V$ converges to the true value function $V^\pi$ with probability 1. The TD(0) update rule is a central method for tabular policy evaluation, and moreover, this update and its variants forms a core component of many deep reinforcement learning agents, including value-based approaches such as DQN and its descendants \citep{mnih15human}, as well as actor-critic approaches such as A3C and its descendants \citep{mnih2016asynchronous}. For conciseness, throughout the paper we use ``TD'' to refer to this classical temporal-difference learning algorithm.

\subsection{Quantile temporal-difference learning}

In contrast to TD learning, which maintains an estimate of the expected return at each state, distributional RL algorithms \citep{bdr2022} aim to predict the full probability distribution of the random return in Equation~\eqref{eq:random-return}. Quantile temporal-difference learning \citep[QTD][]{dabney2018distributional}, in particular, maintains a \emph{collection} of predictions at each state, denoted $((\theta(x, i))_{i=1}^m : x \in \mathcal{X})$, and has formed a core component of many deep reinforcement learning agents \citep{dabney2018implicit,yang2019fully,bodnar2020quantile,bellemare2020autonomous,wurman2022outracing,fawzi2022discovering}.
The intention, in contrast to having $V(x)$ directly approximate the mean return from $x$ in TD learning, is to have $\theta(x, i)$ approximate the $\tau_i$-quantile of the distribution of the random return in Equation~\eqref{eq:random-return}, with $\tau_i = \tfrac{2i-1}{2m}$, for $i=1,\ldots,m$; see Figure~\ref{fig:example-qtd} for an illustration. We write QTD($m$) to denote the instantiation of QTD with $m$ quantiles.

In analogy with TD learning, upon observing a transition $(x, r, x')$  generated by $\pi$, QTD updates all estimates $(\theta(x, i))_{i=1}^m$ at state $x$ by selecting a learning rate $\alpha \geq 0$, and performing the assignments
\begin{align}
    \textstyle
    & \theta(x, i) \leftarrow  \theta(x,i) + \label{eq:qtd}\\
    & \qquad \alpha \Big( \tau_i - \frac{1}{m} \sum_{j=1}^m \mathbbm{1}[ r + \gamma \theta(x', j) - \theta(x, i) < 0 ] \Big) \, , \nonumber
\end{align}
for all $i=1,\ldots,m$. This algorithm differs from TD in a few important ways. First, each prediction $\theta(x, i)$ is updated differently, due to the presence of the parameter $\tau_i$ in the update. Second, the update depends only on the \emph{sign} (not magnitude) of the temporal-difference errors appearing in Equation~\eqref{eq:qtd}, meaning that the update magnitude is bounded, in contrast to those of TD.
The form of the update itself is motivated through the quantile regression loss \citep{koenker1978regression,koenker2005quantile}; see \citet{rowland2023analysis} and \citet{bdr2022} for further background and theory regarding QTD, and Appendix~\ref{appendix:comp} for an overview and discussion of computational considerations.

\begin{figure}[b]
    \centering
    \ \ \ \ \ \includegraphics[width=.43\textwidth]{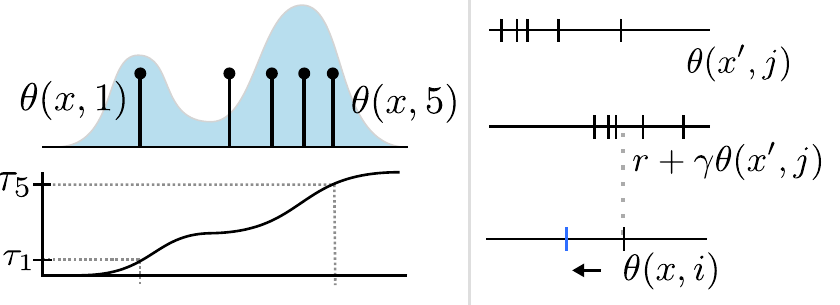}
    \caption{Left: Example return distribution at a state $x$ in light blue, with  QTD quantile predictions $(\theta(x, i))_{i=1}^5$ ($m=5$). Exact predictions correspond to the quantiles indicated on the CDF below. Right: Illustration of computation of the update to $\theta(x, i)$ (from black to blue marker) upon observing a transition $(x, r, x')$.}
    \label{fig:example-qtd}
\end{figure}

\section{Quantile temporal-difference learning for mean-return estimation}

Quantile temporal-difference learning estimates the return distribution at a state $x$ with the discrete distribution supported at the learnt quantile values $(\theta(x,i))_{i=1}^m$:
\begin{align*}
    \sum_{i=1}^m \frac{1}{m} \delta_{\theta(x,i)} \, ;
\end{align*}
see Figure~\ref{fig:example-qtd}. A natural estimator for value at state $x$ is therefore obtained by extracting the mean of this approximate distribution, by averaging these quantiles:
\begin{align}\label{eq:qtd-mean-estimate}
    \frac{1}{m} \sum_{i=1}^m \theta(x, i) \, .
\end{align}
This is the estimator of value typically used in applications combining QTD with deep reinforcement learning. The approach of averaging certain quantile estimators to approximate the mean of a distribution in fact dates back to at least the work of \citet{daniell1920observations} and \citet{mosteller1946some}, with \citet{gastwirth1966robust} observing that this approach to estimation provides competitive relative efficiency across a wide variety of distributions, including those with heavy tails, where the usual sample-average mean estimator can be inefficient.

Thus, although not originally designed with this connection in mind, QTD naturally combines this approach to mean estimation with the notion of bootstrapping in reinforcement learning. Given the motivation above, we might conjecture that QTD provides an approach to value estimation that is effective across a wide range of environments, particularly those with heavy-tailed reward distributions. For concreteness, the QTD algorithm for value estimation is presented in Algorithm~\ref{alg:qtd-mean}. The additional variables $\theta'$ are used to avoid issues when $x'_t = x_t$; in such cases, this means that the for-loop over the quantile index $i$ can be performed in any order (or in parallel) without affecting the result of the algorithm.

\begin{algorithm}
    \begin{algorithmic}[1]
        \REQUIRE Initial quantile estimates $((\theta(x, i))_{i=1}^m : x \in \mathcal{X})$, learning rate $\alpha$, number of updates $T$.
        \FOR{$t=1,\ldots,T$}
            \STATE Observe transition $(x_t, r_t, x'_t)$.
            \FOR{$i=1,\ldots,m$}
                \STATE Set $ \theta'(x_t, i) \leftarrow \theta(x_t,i) + $ \\
                    $\  \alpha \Big( \tau_i - \frac{1}{m} \sum_{j=1}^m \mathbbm{1}[\theta(x_t, i) - r_t - \gamma \theta(x'_t, j) < 0 ] \Big) $ \label{algline:update}
            \ENDFOR
            \STATE Set $\theta(x_t, i) \leftarrow \theta'(x_t, i)$ for $i=1,\ldots,m$
        \ENDFOR
        \STATE \textbf{return} $(\tfrac{1}{m} \sum_{i=1}^m \theta(x, i) : x \in \mathcal{X})$
    \end{algorithmic}
    \caption{QTD($m$) for value estimation.}
    \label{alg:qtd-mean}
\end{algorithm}

As an initial comparison between TD and QTD for value estimation, we compare mean-squared error for value estimation on a suite of nine simple MRPs. Full details for replication are provided in Appendix~\ref{sec:further-experiment-details}, with crucial details for the comparisons given here. The structure of the MRPs is given by the Cartesian product of three levels of stochasticity in transition structure:
\begin{itemize}
    \item Deterministic cycle structure;
    \item Sparse stochastic transition structure (sampled from a Garnet distribution; 
    \item Dense stochastic transition structure (sampled from Dirichlet($1,\ldots,1)$ distributions); \citeauthor{archibald1995generation}, \citeyear{archibald1995generation});
\end{itemize}
together with three levels of stochasticity in reward structure:
\begin{itemize}
    \item Deterministic rewards;
    \item Gaussian (variance 1) rewards;
    \item Exponentially distributed (rate 1) rewards.
\end{itemize}

We focus on the use of constant learning rates throughout training, as is commonly the case in practice, and sweep across a variety of learning rates for both methods. We run both TD and QTD (using 128 quantiles) with a variety of learning rates, and measure the mean-squared error to the true value function after 1,000 updates via online interaction with the environments. The results of the sweep over learning rates are displayed in Figure~\ref{fig:main-sweep}; in this experiment and all that follow, each run was repeated 1,000 times, and the (narrow) confidence bands displayed are obtained via a measurement of $\pm$2 times the empirical standard error.

\begin{figure}[h]
    \centering
    
    \includegraphics[width=.48\textwidth]{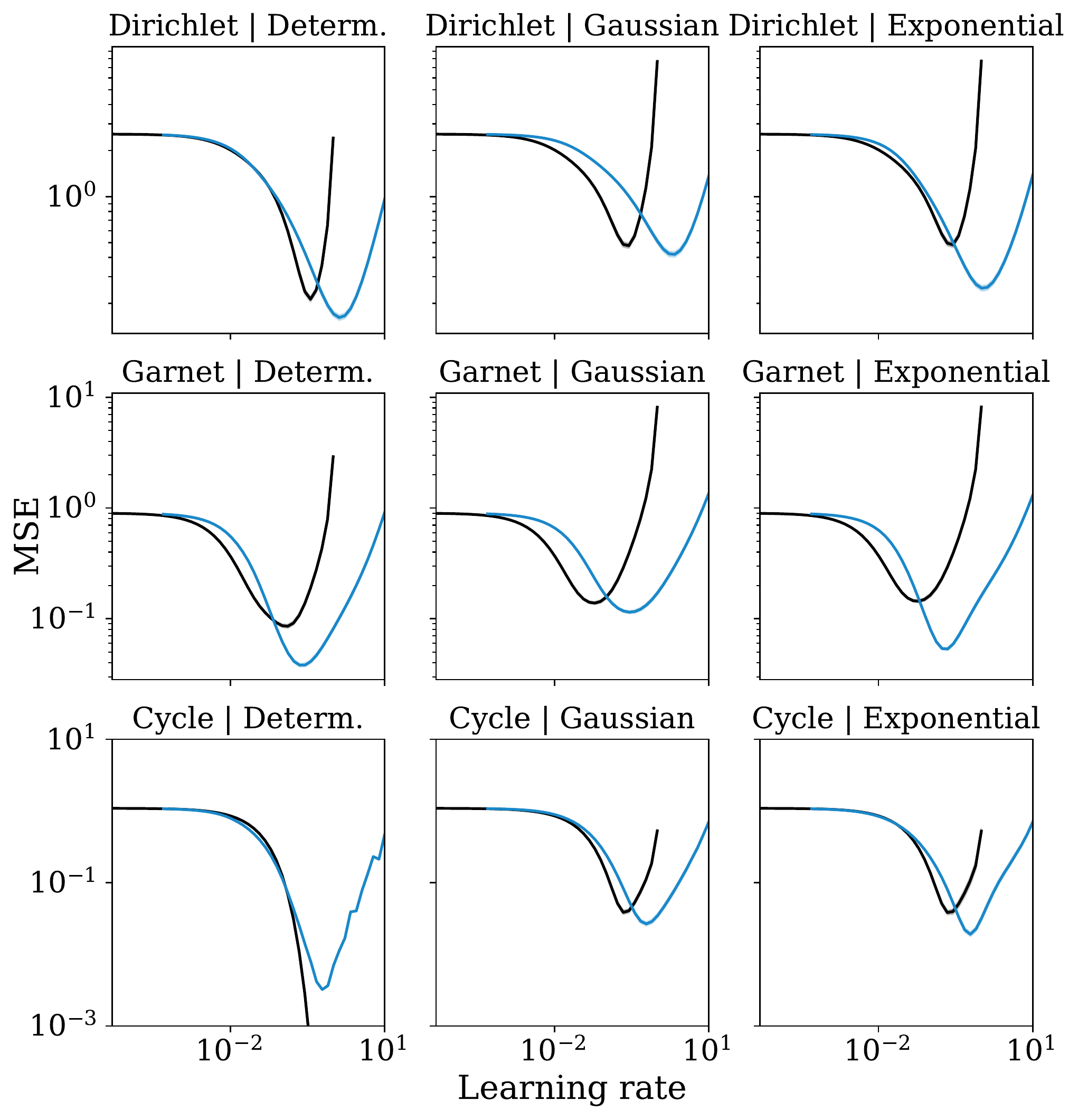}
    
    \caption{Mean-squared error against learning rate for TD (black) and QTD(128) (blue), on environments with Dirichlet transition structure (top), Garnet transition structure (middle), and deterministic cycle structure (bottom), and deterministic rewards (left), Gaussian rewards (centre), and exponential rewards (right).}
    \label{fig:main-sweep}
\end{figure}

As expected, in the environments with the heaviest-tailed rewards, QTD obtains a lower mean-squared error than TD. Interestingly, this is also the case in environments with stochasticity only in the transition dynamics, and deterministic rewards. To more easily visualise the extent of these improvements, and to check the robustness of this improvement to the number of updates undertaken by the algorithms, we plot the optimal MSE obtained by QTD as a proportion of that obtained by TD in Figure~\ref{fig:main-sweep-over-updates}, as a function of the number updates completed by each algorithm. This preliminary experiment has already yielded a perhaps surprising conclusion:

\begin{tcolorbox}
    \begin{center}
        \emph{
        Even in the tabular setting, QTD, a distributional
        reinforcement learning algorithm, can outperform
        classical TD learning in estimating expected returns.
        }
    \end{center}
\end{tcolorbox}

In addition to obtaining superior performance relative to TD when optimising over learning rates, Figure~\ref{fig:main-sweep} also indicates that performance degradation due to a larger-than-optimal learning rate is considerably less severe with QTD than with TD in these environments.

Importantly, however, we also note that for the deterministic environment in the suite, the performance of TD is far superior to QTD. In this case, the TD algorithm is able to very accurately approximate the value function, since the update in Equation~\ref{eq:td-update} is essentially implementing exact asynchronous dynamic programming when $\alpha=1$.

The results above have shown that in some sense, QTD has a complementary performance profile to TD, viewed as algorithms for value estimation, and that the stochaticity of the environment is one important factor in determining the relative performance of QTD and TD.
What else can be said about the performance of QTD in comparison with TD? 
We address this questions in two ways. First, we develop the theory of QTD for mean estimation in Section~\ref{sec:theory}, establishing asymptotic guarantees on the quality of the value predictions that the algorithm makes. Second, we conduct further empirical investigations in Section~\ref{sec:further-experiments}, aiming to develop a more nuanced understanding of the relative performance of QTD and TD in practice.

\begin{figure}
    \centering
    
    \includegraphics[width=.48\textwidth]{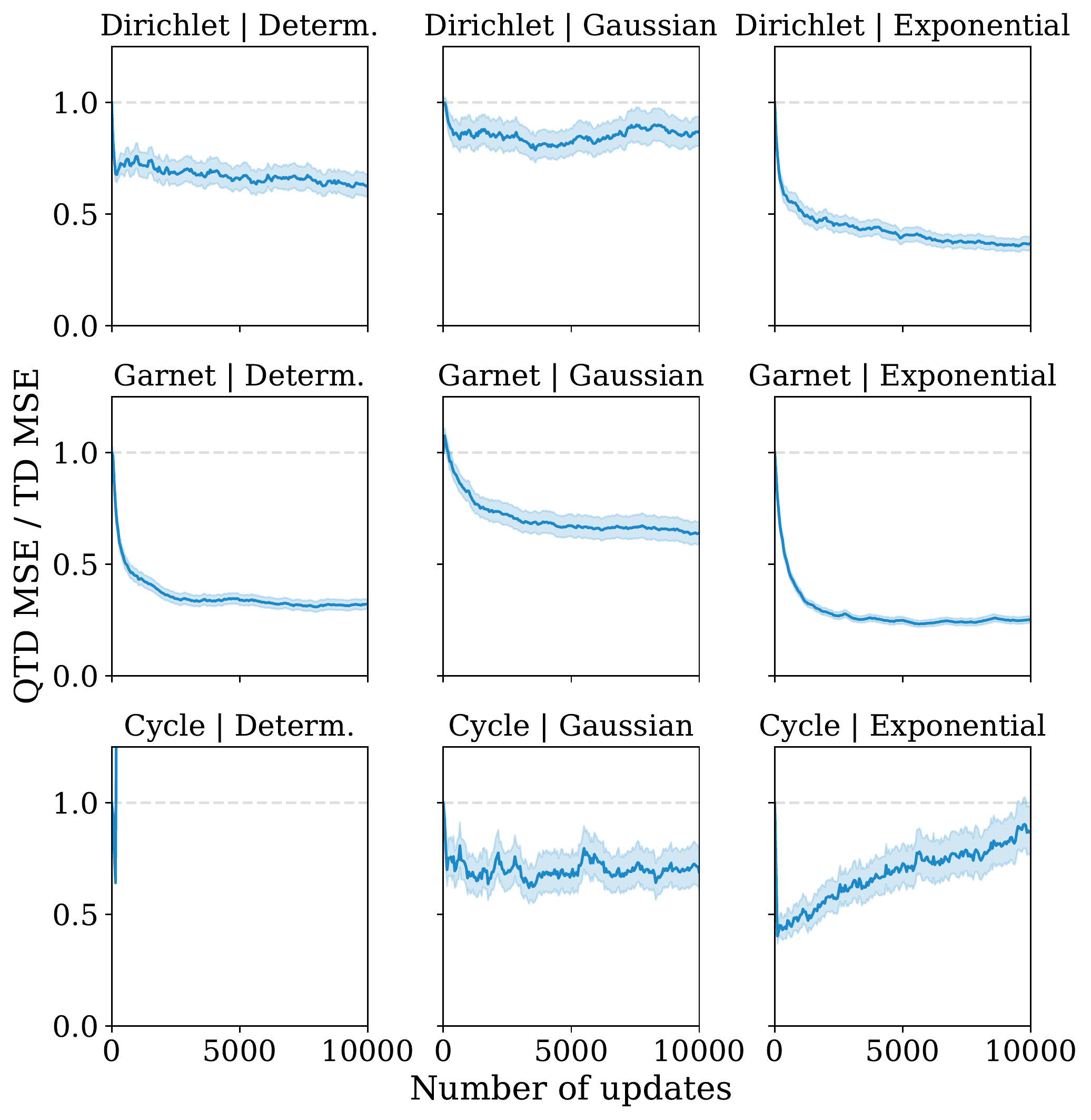}
    
    \caption{Improvement of QTD(128) over TD in mean-squared error against number of updates for all environments in Figure~\ref{fig:main-sweep}.}
    \label{fig:main-sweep-over-updates}
\end{figure}

\section{Theoretical analysis}\label{sec:theory}

For the TD learning update rule in Section~\ref{sec:td-background}, it is known that under mild conditions on the reward distributions of the MRP, learning rates, and frequency that each state $x$ is updated, the predictions $V$ converge to $V^\pi$ with probability 1 \citep{watkins1989learning,watkins1992q,dayan1992convergence,dayan1994td,tsitsiklis1994asynchronous,jaakkola1994convergence,bertsekas1996neuro}. This core convergence result justifies the use of TD for  policy evaluation.

\renewcommand*{\thefootnote}{\fnsymbol{footnote}}

\citet{rowland2023analysis} show that under even milder conditions, QTD($m$) also converges with probability 1
\footnote{QTD may converge to a \emph{set} of fixed points, rather than a unique fixed point as with TD; the discussion above refers to any point in this set of fixed points.}.
However, in general, the estimate of expected returns $V^{\text{QTD}}_m$ extracted from a point of convergence $\theta^{\text{QTD}}_m$ of QTD via Equation~\eqref{eq:qtd-mean-estimate} is \emph{not} exactly equal to $V^\pi$.
Intuitively, this stems from the fact that the algorithm is not estimating the mean of the return distribution directly, but rather a finite collection of quantiles, and this information is insufficient to exactly reconstruct the mean of the distribution in question.

Based on this observation, we bound the expected error of the (random) value estimator $\hat{V}$ obtained from  Algorithm~\ref{alg:qtd-mean} as follows:
\begin{align}
    \mathbb{E}[\|\hat{V} - V^\pi \|] \leq 
    \underbrace{\mathbb{E}[\|\hat{V} - V^{\text{QTD}}_m \|]}_{\text{Finite-sample error}} + \underbrace{\| V^{\text{QTD}}_m - V^\pi \|}_{\text{Fixed-point error}} \, . \label{eq:decomposition}
\end{align}
The precise norm is unimportant here; the main aim is to highlight the role played by fixed-point error and finite-sample error in the overall error incurred by the QTD value estimator. We now compare QTD and TD with each of these terms in mind, beginning with fixed-point error.

\subsection{Fixed-point error}

As noted above, TD incurs zero fixed-point error, as its point of convergence is precisely $V^\pi$. However, this is generally not true of QTD. Nevertheless, it is possible to bound the fixed-point error of QTD as a function of the number of quantiles estimated by QTD in many cases. The following result is a straightforward consequence of the fixed-point analysis of \citet{rowland2023analysis}. Proofs of results stated in the main paper are provided in Appendix~\ref{sec:proofs}.

\begin{restatable}{proposition}{propQTDFixedPoint}\label{prop:qtd-fixed-point-bound}
    For an MRP with all reward distributions supported on $[R_{\text{min}}, R_{\text{max}}]$, any convergence point $\theta^{\text{QTD}}_m$ of QTD($m$) with corresponding value function estimate $V^{\text{QTD}}_m = (\tfrac{1}{m} \sum_{i=1}^m \theta^{\text{QTD}}_m(x, i) : x \in \mathcal{X})$ satisfies
    \begin{align*}
        \| V^{\text{QTD}}_m - V^\pi \|_\infty \leq \frac{R_{\text{max}} - R_{\text{min}}}{2m(1-\gamma)^2} \, .
    \end{align*}
\end{restatable}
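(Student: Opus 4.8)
The plan is to convert the QTD fixed-point equation into an ordinary Bellman equation for $V^{\text{QTD}}_m$ carrying an explicit error term, and then bound that error term in terms of $m$. Write $\eta^{\text{QTD}}_m$ for the return-distribution estimate induced by a convergence point, so $\eta^{\text{QTD}}_m(x) = \tfrac{1}{m}\sum_{i=1}^m \delta_{\theta^{\text{QTD}}_m(x,i)}$. From the fixed-point analysis of \citet{rowland2023analysis} I would use two facts: first, that each $\theta^{\text{QTD}}_m(x,i)$ is a $\tau_i$-quantile of the distributional Bellman target $(\mathcal{T}^\pi \eta^{\text{QTD}}_m)(x)$; and second, that every such fixed point is supported inside $[\tfrac{R_{\text{min}}}{1-\gamma}, \tfrac{R_{\text{max}}}{1-\gamma}]$, the interval on which all genuine return distributions live. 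The latter follows because this family of distributions is preserved by $\mathcal{T}^\pi$ (rewards lie in $[R_{\text{min}},R_{\text{max}}]$ and $R_{\text{min}} + \gamma \tfrac{R_{\text{min}}}{1-\gamma} = \tfrac{R_{\text{min}}}{1-\gamma}$, similarly for the upper end) and by the quantile projection (quantiles of a distribution lie in its support), and this family is closed, so it contains the limit of the QTD dynamic-programming iterates from any initialisation inside it, hence all fixed points.

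Next I would take the mean of both sides of the fixed-point equation, state by state. Since the mean of $(\mathcal{T}^\pi \eta)(x)$ is $\mathbb{E}^\pi_x[R_0] + \gamma\, \mathbb{E}^\pi_x[\mathrm{mean}(\eta(X_1))]$, and $\mathrm{mean}(\eta^{\text{QTD}}_m(x)) = \tfrac{1}{m}\sum_{i=1}^m \theta^{\text{QTD}}_m(x,i) = V^{\text{QTD}}_m(x)$, this yields
\[
V^{\text{QTD}}_m(x) = \mathbb{E}^\pi_x[R_0] + \gamma\, \mathbb{E}^\pi_x\big[ V^{\text{QTD}}_m(X_1) \big] + \varepsilon(x),
\]
where $\varepsilon(x)$ is the gap between the true mean of $(\mathcal{T}^\pi \eta^{\text{QTD}}_m)(x)$ and the average of its $m$ estimated quantiles. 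Subtracting the Bellman equation for $V^\pi$ and inverting $I - \gamma P^\pi$ (with $P^\pi$ the transition matrix induced by $\pi$, so $\|(I-\gamma P^\pi)^{-1}\|_\infty \le (1-\gamma)^{-1}$) gives $\|V^{\text{QTD}}_m - V^\pi\|_\infty \le (1-\gamma)^{-1}\|\varepsilon\|_\infty$.

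It then remains to bound $|\varepsilon(x)|$. Writing $g$ for a quantile function of $(\mathcal{T}^\pi \eta^{\text{QTD}}_m)(x)$, so that $\mathrm{mean} = \int_0^1 g(u)\,du$ and the $i$-th estimated quantile is $g(\tau_i)$, the quantity $\varepsilon(x) = \tfrac{1}{m}\sum_{i=1}^m g(\tau_i) - \int_0^1 g(u)\,du$ is exactly the error of the midpoint quadrature rule for the nondecreasing function $g$ on $[0,1]$, with $\tau_i = \tfrac{2i-1}{2m}$ the midpoint of $[\tfrac{i-1}{m}, \tfrac{i}{m}]$. An elementary monotonicity estimate on each subinterval bounds its contribution by $\tfrac{1}{2m}\big(g((\tfrac{i}{m})^-) - g((\tfrac{i-1}{m})^+)\big)$, and these telescope to at most $\tfrac{1}{2m}$ times the total variation of $g$ over $(0,1)$, which equals the length of the support of $(\mathcal{T}^\pi \eta^{\text{QTD}}_m)(x)$. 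By the support bound from the first paragraph this length is at most $\tfrac{R_{\text{max}} - R_{\text{min}}}{1-\gamma}$, so $\|\varepsilon\|_\infty \le \tfrac{R_{\text{max}} - R_{\text{min}}}{2m(1-\gamma)}$, and combining with the previous display proves the claim.

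The main obstacle is the support bound together with the care needed around generalised inverses: when the Bellman target has atoms its CDF has flat parts, so there is a range of valid $\tau_i$-quantiles, and one must check that the average of any admissible choice still lies within $\tfrac{1}{2m}$ of the true mean times the support length. This follows by sandwiching between the left- and right-continuous inverse functions, both of which integrate to the mean, so the quadrature estimate applies to each and hence to their convex combinations. Everything else is routine bookkeeping; conceptually, the only error QTD's value estimate incurs, per Bellman backup, is the discrepancy between averaging $m$ quantiles and computing a true mean, which is precisely the classical midpoint-rule error for a bounded monotone function.
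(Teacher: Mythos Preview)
Your argument is correct. It differs genuinely from the paper's proof, which simply invokes Proposition~6.1 of \citet{rowland2023analysis} to obtain $w_1(\eta^{\text{QTD}}_m(x),\eta^\pi(x)) \le \tfrac{R_{\text{max}}-R_{\text{min}}}{2m(1-\gamma)^2}$ and then uses that $w_1$ upper-bounds the difference of means. You instead bypass Wasserstein distance entirely: you write an exact Bellman equation for $V^{\text{QTD}}_m$ with residual $\varepsilon(x)$, invert $I-\gamma P^\pi$ to pick up one factor of $(1-\gamma)^{-1}$, and then identify $\varepsilon(x)$ as the midpoint-quadrature error for the monotone quantile function of $(\mathcal{T}^\pi\eta^{\text{QTD}}_m)(x)$, bounded by $\tfrac{1}{2m}$ times the support length $(R_{\text{max}}-R_{\text{min}})/(1-\gamma)$. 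The support bound and the handling of non-unique quantiles via sandwiching between the left- and right-continuous inverses are both fine; the telescoping of the per-interval increments to the total variation of the quantile function is also correct.

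What each approach buys: the paper's route is shorter on the page (it black-boxes the hard step) and yields strictly more, namely a bound on the full return-distribution error in $\overline{w}_1$, of which the mean statement is a corollary. Your route is self-contained and more elementary, needing no optimal-transport machinery, and it makes rigorous precisely the midpoint-quadrature heuristic the paper only sketches in Remark~\ref{remark:intuition}. Structurally the two arguments are parallel: both extract one $(1-\gamma)^{-1}$ from the fixed-point/contraction step and one $\tfrac{R_{\text{max}}-R_{\text{min}}}{2m(1-\gamma)}$ from the one-step projection error, just measured in different metrics.
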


This guarantees that in the case of bounded reward distributions, we can ensure that the fixed points of QTD provide arbitrarily accurate value function estimates, as long as $m$ is taken to be sufficiently large relative to the scale of the support of the reward distributions.

\begin{remark}\label{remark:intuition}
    The form of this approximation error is easily interpreted; for a general distribution supported on $[R_\text{min}/(1-\gamma), R_\text{max}/(1-\gamma)]$ (as the return distribution at $x$ is under the conditions of Proposition~\ref{prop:qtd-fixed-point-bound}), with mean $\mu$ and quantile function $F^{-1}$, we have
    \begin{align}\label{eq:integrand}
        \mu = \int_0^1 F^{-1}(\tau)\; \mathrm{d} \tau \approx \sum_{i=1}^m \frac{1}{m} F^{-1}\Big(\frac{2i-1}{2m}\Big) \, .
    \end{align}
    That is, estimating the mean with a finite number of quantiles can be understood as a midpoint-quadrature-based approximation to the true mean. From this point of view, a linear dependence of the error on the range $(R_\text{max} - R_\text{min})/(1-\gamma)$ of the integrand in Equation~\eqref{eq:integrand}, and a dependence $1/m$ on the number of quadrature points $m$ are to be expected. The additional factor of $(1-\gamma)^{-1}$ in the bound stems from the fact that the estimate is obtained from the fixed point of a bootstrapping procedure, in which errors accumulate at each stage. Since only a finite number $m$ of quantiles are estimated at each state, the remaining information about the return distributions is thrown away, and this results in an accumulation of error each time the update in Equation~\eqref{eq:qtd} is applied. This bears a relationship to the notion of Bellman closedness in distributional RL \citep{rowland2019statistics,bdr2022}, and is analogous to the compounding of error under linear function approximation \citep{tsitsiklis1997analysis}.
\end{remark}

We now develop this analysis further, obtaining results for environments with unbounded rewards. We state a bound for the important case of sub-Gaussian rewards below, which follows as a consequence of a much more general bound given by Proposition~\ref{prop:abstract-fixed-point-quality}.

\begin{restatable}{proposition}{propQTDSubGaussian}\label{prop:qtd-fixed-subgaussian}
    Consider an MRP with all reward distributions having means in $[R_{\text{min}}, R_{\text{max}}]$, and all sub-Gaussian with parameter $\sigma^2$, so that $\mathbb{E}^\pi_x[\exp(\lambda (R-\mathbb{E}^\pi_x[R]))] \leq \exp(\lambda^2 \sigma^2/2)$, for all $\lambda \in \mathbb{R}$ and $x \in \mathcal{X}$. Then for the value function estimate $V^{\text{QTD}}_m$ obtained from any convergence point $\theta^{\text{QTD}}_m$ of QTD($m$) via Equation~\eqref{eq:qtd-mean-estimate}, we have
    \begin{align*}
        &\| V^{\text{QTD}}_m - V^\pi \|_\infty \leq \frac{1}{(1-\gamma)m} \times \\
        &\left( \frac{R_\text{max} - R_\text{min} + 2 \sigma \sqrt{2 \log (2m)}}{2(1-\gamma)} + \frac{\sigma}{\sqrt{2 \log(2m)}}  \right) \, .
    \end{align*}
\end{restatable}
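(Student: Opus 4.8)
The plan is to reduce the bound to a per-state statement about the quality of a midpoint-quadrature approximation to a mean, collect the two factors of $(1-\gamma)^{-1}$ that arise from the bootstrapped nature of the fixed point, and then control the per-state quadrature error using the sub-Gaussian reward tails; this last step, carried out at a more general level, is what underlies Proposition~\ref{prop:abstract-fixed-point-quality}. Concretely, by the fixed-point characterisation of \citet{rowland2023analysis}, at any convergence point $\theta^{\text{QTD}}_m$ the value estimate from Equation~\eqref{eq:qtd-mean-estimate} satisfies $V^{\text{QTD}}_m(x) = \tfrac1m \sum_{i=1}^m F^{-1}_{\nu_x}(\tau_i)$, where $\nu_x$ is the distributional Bellman target obtained by pushing the next-state return-distribution estimates $\tfrac1m \sum_j \delta_{\theta^{\text{QTD}}_m(x', j)}$ through the MRP dynamics (with the usual care when $F_{\nu_x}$ has flat regions, which is handled via the imputation-strategy description of QTD fixed points). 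Since $\mathrm{mean}(\nu_x) = r(x) + \gamma (P^\pi V^{\text{QTD}}_m)(x)$ by linearity of expectation, where $P^\pi$ is the state-transition operator induced by $\pi$ and $r(x) = \mathbb{E}^\pi_x[R]$, the vector $e = (e_x)_{x \in \mathcal{X}}$ with $e_x := \mathrm{mean}(\nu_x) - V^{\text{QTD}}_m(x)$ satisfies $V^{\text{QTD}}_m = r + \gamma P^\pi V^{\text{QTD}}_m - e$, so $V^{\text{QTD}}_m - V^\pi = -(I - \gamma P^\pi)^{-1} e$ and $\| V^{\text{QTD}}_m - V^\pi \|_\infty \le \tfrac{1}{1-\gamma} \sup_x |e_x|$. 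It remains to bound the per-state quadrature error $|e_x| = e(\nu_x)$, where $e(\nu) := \big| \mathrm{mean}(\nu) - \tfrac1m \sum_i F^{-1}_\nu(\tau_i) \big|$.

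For a general distribution $\nu$ I would bound $e(\nu)$ at the level of cumulative distribution functions. If $G_\nu$ denotes the CDF of $\tfrac1m \sum_i \delta_{F^{-1}_\nu(\tau_i)}$, then $\mathrm{mean}(\nu) - \tfrac1m \sum_i F^{-1}_\nu(\tau_i) = \int_{\mathbb{R}} (G_\nu(z) - F_\nu(z)) \, \mathrm{d}z$, and a short computation shows $|G_\nu(z) - F_\nu(z)| \le \tfrac1{2m}$ for all $z$, with $|G_\nu(z) - F_\nu(z)| = F_\nu(z)$ below the smallest atom $F^{-1}_\nu(\tau_1)$ and $= 1 - F_\nu(z)$ above the largest atom $F^{-1}_\nu(\tau_m)$. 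Splitting at the median of $\nu$ gives the abstract bound $e(\nu) \le \int_{-\infty}^{\mathrm{med}(\nu)} \min(\tfrac1{2m}, F_\nu(z)) \, \mathrm{d}z + \int_{\mathrm{med}(\nu)}^{\infty} \min(\tfrac1{2m}, 1 - F_\nu(z)) \, \mathrm{d}z$, which I expect to be essentially Proposition~\ref{prop:abstract-fixed-point-quality}; for rewards on $[R_{\min}, R_{\max}]$ the tails of each $\nu_x$ vanish and its bulk lies in an interval of length $(R_{\max} - R_{\min})/(1-\gamma)$, recovering Proposition~\ref{prop:qtd-fixed-point-bound}.

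To finish, I would instantiate this for $\nu = \nu_x$ in the sub-Gaussian case. Writing $V^+_m = \max_{x'} F^{-1}_{\nu_{x'}}(\tau_m)$ and $V^-_m = \min_{x'} F^{-1}_{\nu_{x'}}(\tau_1)$, every atom $\gamma \theta^{\text{QTD}}_m(x', j)$ contributing to a target $\nu_x$ lies in $[\gamma V^-_m, \gamma V^+_m]$. Dominating $R + \gamma \theta^{\text{QTD}}_m(X', J)$ by $R + \gamma V^+_m$ and applying $\mathbb{P}^\pi_x(R > \mathbb{E}^\pi_x[R] + t) \le e^{-t^2/(2\sigma^2)}$ gives $F^{-1}_{\nu_x}(\tau_m) \le \gamma V^+_m + R_{\max} + \sigma\sqrt{2\log(2m)}$; maximising over $x$ and solving the resulting self-referential inequality yields $V^+_m \le (R_{\max} + \sigma\sqrt{2\log(2m)})/(1-\gamma)$, and symmetrically $V^-_m \ge (R_{\min} - \sigma\sqrt{2\log(2m)})/(1-\gamma)$, so the bulk of each $\nu_x$ spans an interval of length at most $(R_{\max} - R_{\min} + 2\sigma\sqrt{2\log(2m)})/(1-\gamma)$; integrating the $\tfrac1{2m}$ cap over this interval gives the first term in the parentheses. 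For the tail integrals, the same domination reduces $1 - F_{\nu_x}(z)$ (resp.\ $F_{\nu_x}(z)$) to a shifted reward tail, and $\int_u^\infty e^{-t^2/(2\sigma^2)} \, \mathrm{d}t \le (\sigma^2/u) e^{-u^2/(2\sigma^2)}$ together with the cap at $\tfrac1{2m}$ — the reward tail falls below $\tfrac1{2m}$ once $t \ge \sigma\sqrt{2\log(2m)}$, at which threshold the first bound is evaluated — contributes a term of order $\sigma/(m\sqrt{2\log(2m)})$, namely the second term. Combining and multiplying by $(1-\gamma)^{-1}$ gives the stated inequality.

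The step I expect to be the main obstacle is this last one: making the tail estimates sharp enough that the constants come out exactly as stated (in particular handling any joint dependence of $R$ and the next state, and the case where $\mathrm{med}(\nu_x)$ falls on the unfavourable side of $F^{-1}_{\nu_x}(\tau_1)$ or $F^{-1}_{\nu_x}(\tau_m)$), together with the fixed-point technicality in the first step — QTD may converge to a whole set of fixed points, some with predictions sitting in flat regions of $F_{\nu_x}$, so one must check via the imputation-strategy characterisation of \citet{rowland2023analysis} that $V^{\text{QTD}}_m(x) = \tfrac1m \sum_i F^{-1}_{\nu_x}(\tau_i)$ (or an adequate surrogate) holds at every such point.
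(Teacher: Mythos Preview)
Your proposal is correct and arrives at the same bound, but the route differs from the paper's in two places worth flagging. First, the paper works throughout in the sup-Wasserstein-1 distance: it uses the triangle inequality and $\gamma$-contractivity of $\mathcal{T}^\pi$ to get $\overline{w}_1(\eta^{\text{QTD}}_m, \eta^\pi) \le (1-\gamma)^{-1}\,\overline{w}_1(\Pi^\lambda \mathcal{T}^\pi \eta^{\text{QTD}}_m, \mathcal{T}^\pi \eta^{\text{QTD}}_m)$, bounds the right-hand side by introducing a clipping operator $\Pi_B$ onto $[\ubar{q}/(1-\gamma), \bar{q}/(1-\gamma)]$, and only at the end passes to means via $|V^{\text{QTD}}_m(x) - V^\pi(x)| \le w_1(\eta^{\text{QTD}}_m(x), \eta^\pi(x))$. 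You instead track only means from the outset, via the scalar relation $V^{\text{QTD}}_m = r + \gamma P^\pi V^{\text{QTD}}_m - e$ and the Neumann series; this is more direct for the stated conclusion but forfeits the stronger intermediate Wasserstein statement (Proposition~\ref{prop:abstract-fixed-point-quality}) --- though your CDF-integral bound on $|e_x|$ is exactly $w_1(\nu_x, \Pi^\lambda \nu_x)$, so the two arguments coincide at the projection-error step. Second, to confine the atoms of $\eta^{\text{QTD}}_m$ to $[\ubar{q}/(1-\gamma), \bar{q}/(1-\gamma)]$ with $\bar{q} = R_{\max} + \sigma\sqrt{2\log(2m)}$ and $\ubar{q} = R_{\min} - \sigma\sqrt{2\log(2m)}$, the paper argues by induction on iterates of $\Pi^\lambda \mathcal{T}^\pi$ starting from $\delta_0$, whereas you solve a self-referential inequality at the fixed point; your version is slicker but needs $V^\pm_m$ defined as the extremal \emph{atoms} $\max_{x,i}\theta^{\text{QTD}}_m(x,i)$ rather than $\max_x F^{-1}_{\nu_x}(\tau_m)$ (else the domination need not cover $\bar{F}^{-1}$ when quantiles are non-unique), and tacitly uses their finiteness --- both of which the inductive route handles automatically. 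The sub-Gaussian tail calculation, bounding $\mathbb{E}[(R-\bar{q})\mathbbm{1}[R>\bar{q}]]$ via $\int_u^\infty e^{-t^2/(2\sigma^2)}\,\mathrm{d}t \le (\sigma^2/u)e^{-u^2/(2\sigma^2)}$ at $u = \sigma\sqrt{2\log(2m)}$, is identical in both.
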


We also state a non-quantitative result applicable to any MDP for which the problem of mean return estimation is well defined.

\begin{restatable}{proposition}{propQTDFiniteMean}\label{prop:qtd-finite-mean}
    Consider an MDP with all reward distributions having finite mean. Then for the value function estimate $V^{\text{QTD}}_m$ obtained from any convergence point $\theta^{\text{QTD}}_m$ of QTD($m$) via Equation~\eqref{eq:qtd-mean-estimate}, we have $\| V^{\text{QTD}}_m - V^\pi \|_\infty \rightarrow 0$ as $m \rightarrow \infty$.
\end{restatable}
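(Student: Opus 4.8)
The plan is to reduce to the bounded-reward case handled by Proposition~\ref{prop:qtd-fixed-point-bound} via truncation. Fix a level $b>0$ and let $M_b$ be the MRP obtained from the given one by replacing every reward $R$ with its clip $R^{(b)} = \max(\min(R,b),-b)$, leaving the transition kernel unchanged; write $V^{\pi,(b)}$ for its value function and $V^{\text{QTD},(b)}_m$ for the value estimate at a convergence point of QTD($m$) run on $M_b$ (which exists by \citet{rowland2023analysis}, $M_b$ having bounded rewards). A triangle inequality gives
\begin{align*}
\|V^{\text{QTD}}_m - V^\pi\|_\infty \le \underbrace{\|V^{\text{QTD}}_m - V^{\text{QTD},(b)}_m\|_\infty}_{(\mathrm i)} + \underbrace{\|V^{\text{QTD},(b)}_m - V^{\pi,(b)}\|_\infty}_{(\mathrm{ii})} + \underbrace{\|V^{\pi,(b)} - V^\pi\|_\infty}_{(\mathrm{iii})}\,.
\end{align*}
Term (ii) is dispatched by Proposition~\ref{prop:qtd-fixed-point-bound} applied to $M_b$: its rewards lie in $[-b,b]$, so $(\mathrm{ii}) \le b/(m(1-\gamma)^2)$, which vanishes as $m\to\infty$ for each fixed $b$. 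Term (iii) is dispatched by the finite-mean hypothesis: clipping perturbs the return only through a discounted sum of terms $|R_t - R_t^{(b)}| \le |R_t|\,\mathbbm{1}[|R_t| > b]$, so $(\mathrm{iii}) \le c_b/(1-\gamma)$ with $c_b := \max_{x,a} \mathbb{E}[\,|R|\,\mathbbm{1}[|R|>b] \mid x,a\,]$, and $c_b \to 0$ as $b\to\infty$ by dominated convergence over the finite state--action set.

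The crux is term (i): one must bound the effect of reward truncation on the QTD($m$) fixed point \emph{uniformly in $m$}. My first route would be to argue in return-distribution space, using that convergence points of QTD($m$) correspond to fixed points $\eta_m$ of $\Pi_m\mathcal{T}^\pi$ — the distributional Bellman operator composed with the $m$-quantile projection — with $V^{\text{QTD}}_m(x)$ the mean of $\eta_m(x)$ \citep{rowland2023analysis}. Since $\mathcal{T}^\pi$ and $\mathcal{T}^{\pi,(b)}$ differ only through the reward, the natural coupling bounds their per-state $1$-Wasserstein discrepancy by $c_b$, and since means are $1$-Lipschitz in $W_1$, a fixed-point-perturbation estimate would give $(\mathrm i) \le c_b/(1-\gamma)$ provided $\Pi_m\mathcal{T}^\pi$ contracts in a suitable supremal metric. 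The alternative route bypasses contraction arguments and instead establishes uniform integrability of the family $\{\nu^{(m)}_x : x\in\mathcal X,\ m\ge 1\}$ of bootstrap target distributions arising at QTD fixed points: each $\nu^{(m)}_x$ is the law of $R + \gamma\,\theta^{\text{QTD}}_m(X',J)$ with $J$ uniform on $\{1,\dots,m\}$, so its tails can be controlled by those of the reward distributions together with those of the $\nu^{(m)}_{x'}$, and the resulting recursion closes using $\gamma<1$ and $|\mathcal X\times\mathcal A|<\infty$. Given uniform integrability, the per-state error of the perturbed Bellman equation satisfied by $V^{\text{QTD}}_m$ — namely the midpoint-quadrature error $|\tfrac1m\sum_i F^{-1}_{\nu^{(m)}_x}(\tau_i) - \mathrm{mean}(\nu^{(m)}_x)|$ of Remark~\ref{remark:intuition} — tends to $0$ uniformly in $x$, since the midpoint rule converges for any uniformly integrable family (its nodes avoid the endpoint singularities of the quantile function).

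The stability step above is where I expect the real difficulty to lie. The quantile projection $\Pi_m$ is non-expansive in $W_\infty$, but with unbounded rewards the per-step perturbation $\mathcal{T}^\pi\eta \mapsto \mathcal{T}^{\pi,(b)}\eta$ can have infinite $W_\infty$-size, so one cannot simply work in $W_\infty$; and $\Pi_m$ is not $W_1$-non-expansive in general (it already fails for $m=1$ when the target carries an atom at the median). The honest fix is therefore either a $W_1$-stability estimate tailored to the specific structure of the truncation perturbation, or the self-bounding uniform-integrability argument sketched above, whose constants need care because averages of quantiles are only crudely bounded by the mean. Once term (i) is controlled by some quantity that vanishes as $b\to\infty$, the proof concludes in the usual two-step way: given $\varepsilon>0$, choose $b$ large enough that (i) and (iii) are each below $\varepsilon/3$, then $m$ large enough that (ii) is below $\varepsilon/3$.
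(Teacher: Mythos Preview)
Your proposal is honestly incomplete at term (i), and the difficulties you flag there are real: the quantile projection is not $W_1$-nonexpansive, so a standard fixed-point perturbation argument in $W_1$ does not go through, and the uniform-integrability route you sketch is not yet a proof. The paper avoids this stability question altogether by taking a different, more direct route: it never introduces a truncated MRP or compares QTD fixed points across two environments.

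Instead, the paper proves a general fixed-point bound valid for unbounded rewards (Proposition~\ref{prop:abstract-fixed-point-quality} / Corollary~\ref{corr:value-bound}): if $\ubar q,\bar q$ bound the $\tfrac{1}{2m}$- and $(1-\tfrac{1}{2m})$-quantiles of every reward distribution, then $\|V^{\text{QTD}}_m-V^\pi\|_\infty$ is at most $(1-\gamma)^{-1}$ times $\tfrac{\bar q-\ubar q}{2m(1-\gamma)}$ plus the reward tail expectations $\mathbb{E}[(R-\bar q)\mathbbm{1}[R>\bar q]]$ and its mirror. The key step is a support lemma, proved by induction on the iterates of $\Pi^\lambda\mathcal{T}^\pi$ from $\delta_0$: since the projection at level $\tau_i\ge\tfrac{1}{2m}$ never outputs a value below the $\tfrac{1}{2m}$-quantile of its input, the fixed-point distributions $\eta^{\text{QTD}}_m(x)$ are supported on $[\ubar q/(1-\gamma),\bar q/(1-\gamma)]$. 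With support controlled, the $W_1$ projection error at the fixed point splits into the bounded-support midpoint-rule term plus the cost of clipping the Bellman target to that interval, which is exactly the tail expectation. A short lemma (Lemma~\ref{lem:cdf-bounds}) then shows that finite mean forces $F^{-1}(1-\tfrac{1}{2m})=o(m)$, so both pieces vanish as $m\to\infty$. Your uniform-integrability sketch is morally aimed at the same support control; if you pursue it, the natural way to close the recursion is precisely this induction, at which point the detour through truncation and the comparison term (i) become unnecessary.
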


This analysis shows that even with unbounded reward distributions, the approximation error of the fixed points of QTD can still be made arbitrarily small by increasing $m$, with a slightly slower rate (relative to the bounded-reward case) of $O(m^{-1}\sqrt{\log(m)}(1-\gamma)^{-2})$ in the case of sub-Gaussian rewards; in general, the heavier the tails of the reward distributions, the slower the convergence may be.

\subsection{Expected updates and variance}

The analysis of the previous section shows that QTD (with a large enough number of quantiles) incurs low fixed-point error, but does not suggest how its finite-sample performance may compare to that of TD, and specifically in which kinds of environments it may outperform TD. To make progress on this question, we return to the other term in Inequality~\eqref{eq:decomposition}, and in particular consider how the updates of TD and QTD contribute to this quantity. We begin by considering the updates of TD.

\textbf{TD update decomposition.} The right-hand side of the TD learning update in Equation~\eqref{eq:td-update} can be rewritten as
\begin{align*}
    & V(x) + \alpha(r + \gamma V(x') - V(x)) \\
    = & (1-\alpha) V(x) + \\
    &\quad \alpha (\underbrace{(T^\pi V)(x)}_{\substack{\text{Expected} \\ \text{update }}}
    + \underbrace{(r + \gamma V(x') - (T^\pi V)(x))}_{\text{Mean-zero noise}}) \, ,
\end{align*}
where $(T^\pi V)(x) = \mathbb{E}^\pi_x[R_0 + \gamma V(X_1)]$ is the classical dynamic programming operator. This decomposition is central to the analyses of TD cited above, and highlights that the learning rate $\alpha$ balances two requirements: a large learning rate increases the expected update towards $(T^\pi V)(x)$, increasing the contraction towards the fixed point $V^\pi$ of $T^\pi$, but also amplifies the mean-zero noise. Note also that the magnitude of the noise is potentially unbounded (if there are unbounded rewards, or if the value estimate $V$ grows large), and that the distance of the expected update $(T^\pi V)(x)$ also grows in magnitude with $V$.

The key to obtaining good performance, and low finite-update error, from TD is therefore selecting a learning rate that balances the tension between these two considerations. These links between temporal-difference learning and dynamic programming are well understood (see e.g.\ \citet{jaakkola1994convergence,tsitsiklis1994asynchronous,bertsekas1996neuro}), and this specific trade-off has been previously quantified under a variety of formalisms; see the work of \citet{kearns2000bias} for the \emph{phased} setting, and \citet{even2003learning} for the synchronous and online settings. 

\begin{table}[t]
    \centering
    \begin{tabular}{c|c|c|c}
           & \shortstack{Fixed-point \\ bias} & \shortstack{Update \\ variance} & \shortstack{Expected \\ update magnitude} \\ \hline
        TD & 0 & Unbounded{*} & $\propto \text{Bellman error}$ \\
        QTD & $\widetilde{\mathcal{O}}(1/m)$\textsuperscript{**} & $\mathcal{O}(1)$ & $\mathcal{O}(1)$
    \end{tabular}
    \caption{Trade-offs made by TD and QTD along various axes. \textsuperscript{*}In general, TD update variance may be unbounded, though there are certain situations in which it is not; see text for further discussion. \textsuperscript{**}For sub-Gaussian reward distributions. $\tilde{O}$ denotes the possible dropping of polylog factors in $m$.}
    \label{tab:comparison}
\end{table}

\textbf{QTD update decomposition.} In analogy, we can also decompose the QTD update in Equation~\eqref{eq:qtd} into an expected update and mean-zero noise; this approach is central to the convergence analysis of \citet{rowland2023analysis}. In particular, the right-hand side of Equation~\eqref{eq:qtd} can be decomposed as follows:
\begin{align*}
     & \alpha \Big( \tau_i - \frac{1}{m} \sum_{j=1}^m \mathbbm{1}[ r + \gamma \theta(x', j)   < \theta(x, i) ] \Big) \\
     = & \alpha \Big( \underbrace{\tau_i - \mathbb{P}^\pi_x(\Delta_{iJ}(x, R_0, X_1) < 0)}_{\text{Expected update}} +  \\
     &  \underbrace{\mathbb{P}^\pi_x(\Delta_{iJ}(x,R, X') < 0) - \frac{1}{m} \sum_{j=1}^m \mathbbm{1}[ \Delta_{ij}(x, r, x') < 0]}_{\text{Mean-zero noise}} \Big)\, .
\end{align*}
where we write $\Delta_{ij}(x, r, x') = r + \gamma \theta(x', j) -\theta(x, i)$. 
\citet{rowland2023analysis} show that following these expected updates leads to the points of convergence for QTD, and we therefore have a similar tension as described in TD, between an expected update that moves us towards the points of convergence, and noise that may perturb this progress.

A central distinction between this decomposition for TD, and for QTD, is that in QTD both expected update and noise are bounded by 1, in stark contrast to the potentially unbounded terms in the TD update, which may grow in proportion with the value function norm $\|V\|_\infty$. This suggests that QTD may tolerate higher step sizes than TD in stochastic environments, and also that as the level of stochasticity increases, due to higher-variance/heavier-tailed rewards, the performance of QTD may be more resilient than that of TD. Conversely, in near-deterministic environments, since the expected update magnitude of QTD is effectively independent of the magnitude of the update error, we may expect poorer performance than TD, which is able to make updates in proportion to the level of error. A summary of the comparison points highlighted between TD and QTD in this section is given in Table~\ref{tab:comparison}.

\section{Further empirical analysis}\label{sec:further-experiments}

The theoretical analysis in the previous section has elucidated several salient differences between TD and QTD as policy evaluation algorithms; we now seek to compare these methods empirically, and test the predictions made in light of the analysis in the earlier sections.

\subsection{Heavy-tailed rewards}

As alluded to above, the sensitivity of TD updates to the magnitude of prediction errors makes it difficult to average out heavy-tailed noise, and we hypothesise that in cases of extremely heavy-tailed noise, QTD should strongly outperform TD. To this end, we extend the example environments from Figure~\ref{fig:main-sweep}, with $t_2$-distributed rewards; these are exceptionally heavy-tailed rewards, with infinite variance. The results of QTD and TD in these environments are displayed in Figure~\ref{fig:t}, with QTD providing substantial improvements in MSE. A plot of MSE against learning rates is provided in Appendix~\ref{sec:app:t}.

\begin{figure}[h]
    \centering
    
    \includegraphics[keepaspectratio,width=.48\textwidth]{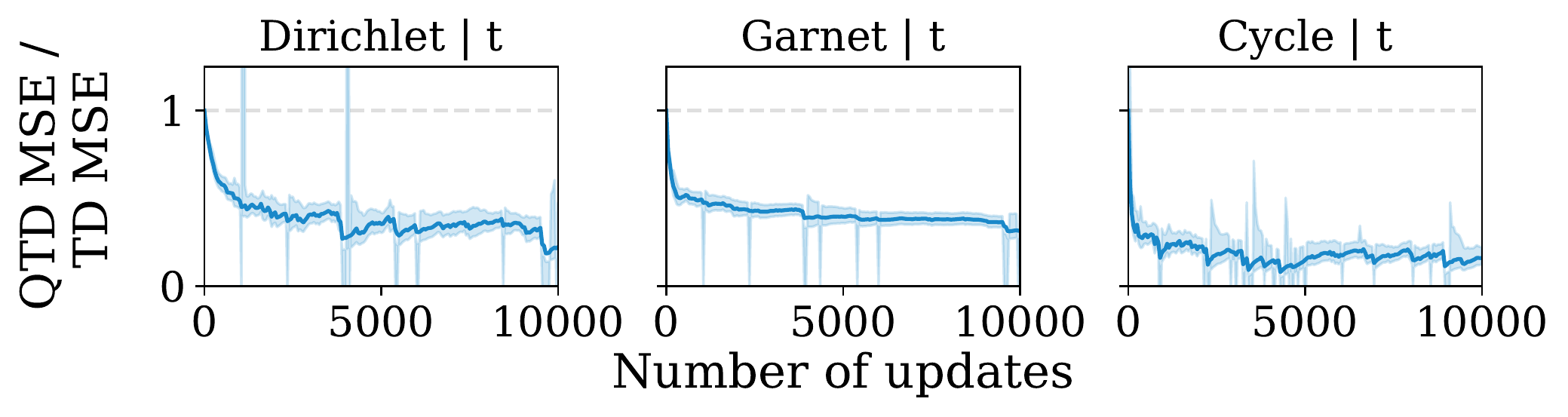}

    \caption{Relative improvement of QTD(128) over TD in mean-squared error against number of updates for all transition structures in Figure~\ref{fig:main-sweep}, with $t_2$-distributed rewards.}
    \label{fig:t}
\end{figure}

\subsection{Low numbers of quantiles}\label{sec:low-m}

Propositions~\ref{prop:qtd-fixed-point-bound}, \ref{prop:qtd-fixed-subgaussian}, and \ref{prop:qtd-finite-mean} suggest that for low numbers of quantiles $m$, the fixed-point bias of QTD may dominate the error decomposition described above, meaning that it may be outperformed by TD in certain environments. Figure~\ref{fig:low-m} illustrates such a case, under the same experimental set-up as earlier in the section; MSE is poor with low values of $m$ for all learning rates, and comparable performance to TD is recovered by increasing $m$.

\begin{figure}[h]
    \centering
    \null
    \hfill
    \includegraphics[keepaspectratio,width=.145\textwidth,valign=c]{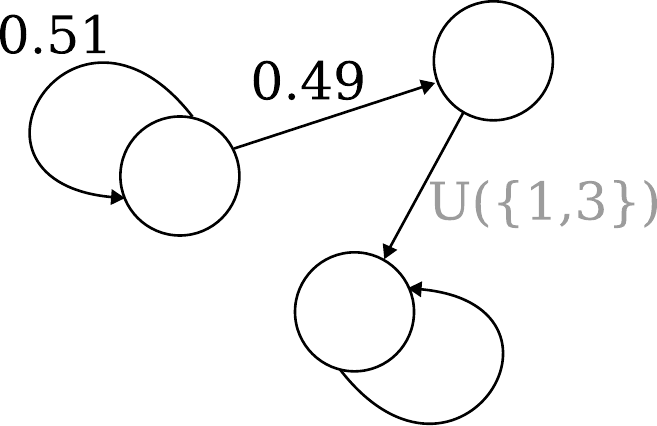}
    \hfill
    \includegraphics[keepaspectratio,width=.325\textwidth,valign=c]{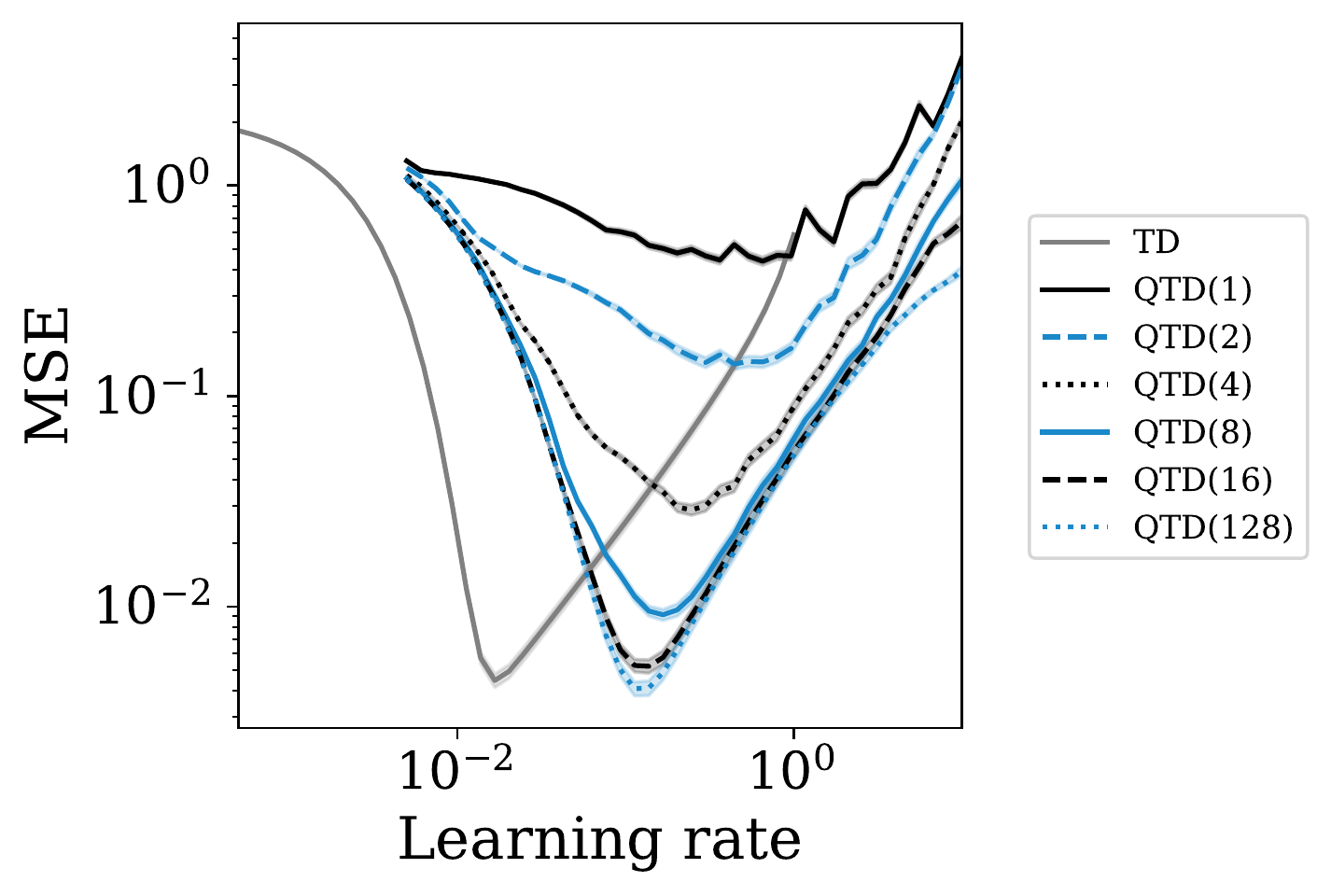}
    \hfill
    \null
    \caption{An example environment (left) where low values of $m$ induce particularly high fixed-point bias. learning rate vs. mean-squared error for QTD($m$) with varying $m$ (right).}
    \label{fig:low-m}
\end{figure}

What makes QTD($m$) with $m$ low behave so poorly in this example? It is precisely due to the fact that the average of a low number of quantiles in this domain is quite different from the mean, as mentioned in Remark~\ref{remark:intuition}. This serves to illustrate cases where large numbers of quantiles are necessary for accurate predictions, as the theory in Section~\ref{sec:theory} suggests. We also include results on the main suite of environments for QTD(1) and QTD(16) in Appendix~\ref{sec:app:main-sweep-m}, for comparison with the results obtained for QTD(128) above. QTD(1) is outperformed by TD in several environments, as the experiment in Figure~\ref{fig:low-m} suggests may be the case. On the other hand, the performance of QTD(16) is broadly in line with that of QTD(128); speaking pragmatically, we have found that using on the order of tens of quantiles is generally sufficient in practice.

\subsection{Varying reward scales}

Given our previous observations that TD outperforms QTD in deterministic environments, and that QTD tends to outperform TD in environments with significant stochasticity, we run an additional comparison to investigate the levels of stochasticity required to see benefits from QTD. In Figure~\ref{fig:main-sweep}, we see advantages to QTD in all environments with stochastic transition structure, but a clear difference in performance in passing from the environment with cycle transition structure and Gaussian reward (centre-bottom) to the same transition structure with deterministic rewards (bottom-left). In Figure~\ref{fig:reward-scale-sweep}, we plot the relative performance of QTD(128) and TD in the environment with cycle transition structure, and Gaussian rewards of varying levels of standard deviation. The results show that at low levels of reward noise, the performance of TD is far superior to QTD, as in the purely deterministic case, with the relative performance of QTD improving monotonically as a function of the standard deviation of the reward noise.

\begin{figure}[h]
    \centering
    \includegraphics[keepaspectratio,width=.4\textwidth,valign=c]{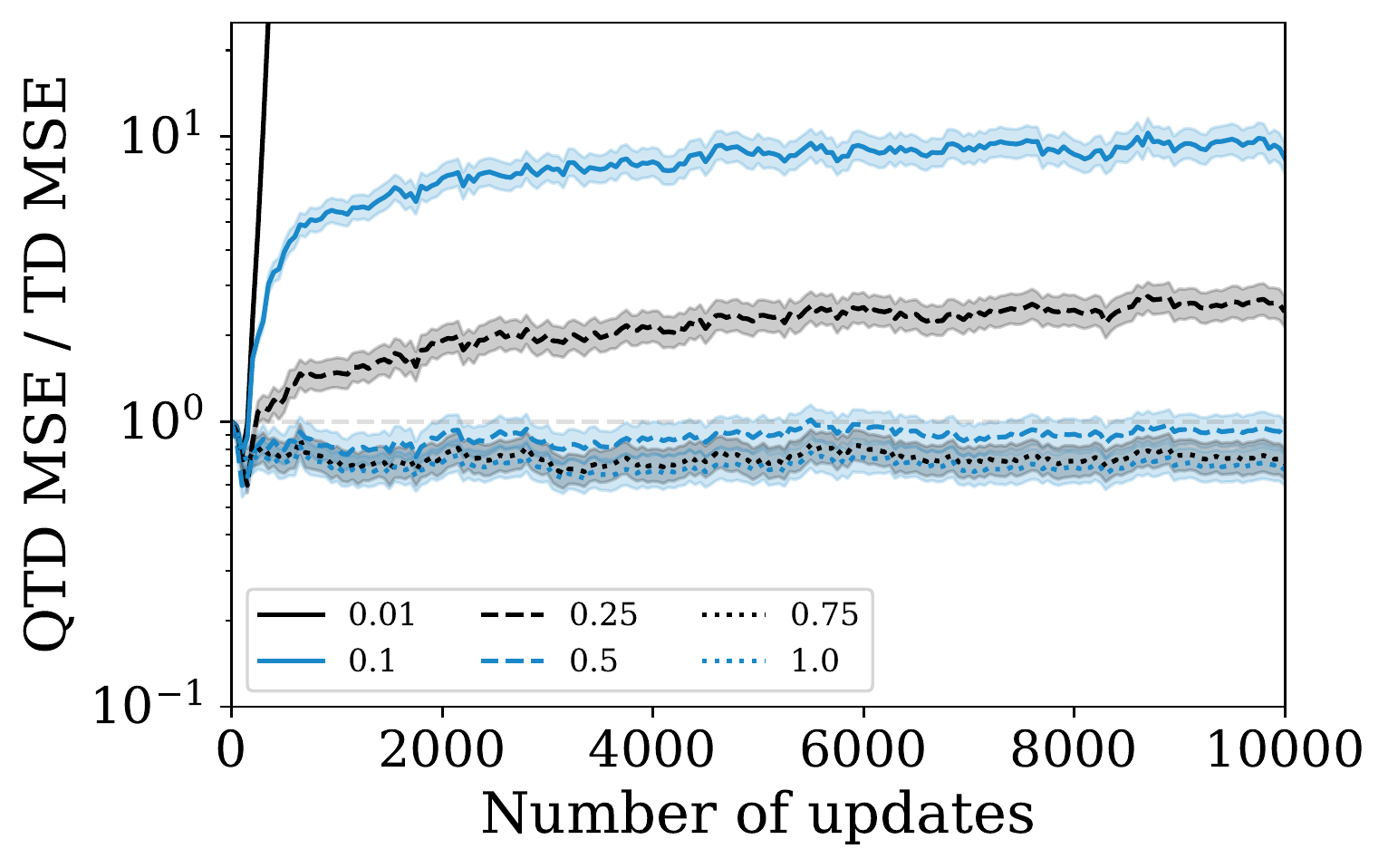}
    \caption{Performance of QTD(128) relative to TD, with optimal learning rates, on the environment with deterministic cycle transition structure, and Gaussian rewards of varying standard deviation, indicated in the legend.}
    \label{fig:reward-scale-sweep}
\end{figure}

\subsection{An ablation: Pseudo-quantile temporal-difference learning}\label{sec:new-algorithms}

We have motivated QTD theoretically as an effective algorithm for tabular policy evaluation, and we have also seen this borne out empirically. We have described its contrasting performance profile to TD, and noted its properties of (i) bounded-magnitude updates, and (ii) controllable fixed-point error. Taking a step back, a natural question to ask is: are there further nuances to the particular form of the QTD update which make it an effective algorithm? To investigate this question further, in this section we study a new algorithm for tabular policy evaluation, \emph{pseudo-quantile temporal-difference learning} (PQTD), which uses the same form of quantile updates as QTD, though does \emph{not} aim to learn quantiles of the return distribution. Our goal is to understand the role played by these two components of QTD in forming an effective policy evaluation algorithm.

In particular, motivated by \citeauthor{achab2020ranking}'s (\citeyear{achab2020ranking}) study of the one-step random return $R + \gamma V^\pi(X')$ (see also \citet{achab2021robustness},  \citet{achab2022distributional}, and \citet{achab2023one}), PQTD aims to learn the quantiles of the distribution of these random variables, rather than those of the usual return distribution. The approach is presented in Algorithm~\ref{alg:pqtd}. The distinction from QTD is that the targets in the quantile regression update are constructed from the mean-return estimate at the next state, rather than from the quantile estimates themselves; the learnt quantile estimates therefore reflect only the randomness resulting from a single step of environment interaction. This is also motivated by the approach of two-hot encoded categorical value learning in recent deep RL applications \citep{schrittwieser2020mastering,hessel2021muesli,hafner2023mastering}, which can be interpreted as a one-step version of categorical distributional RL \citep{bellemare2017distributional}.

\begin{algorithm}
    \begin{algorithmic}[1]
        \REQUIRE Quantile estimates $((\theta(x, i))_{i=1}^m : x \in \mathcal{X})$, observed transition $(x, r, x')$, learning rate $\alpha$.
        \FOR{$i=1,\ldots,m$}
            \STATE $ \theta'(x, i) \leftarrow  \theta(x,i) + $ \\
                $\ \alpha \Big( \tau_i - \mathbbm{1}[\theta(x, i) - r - \gamma  \frac{1}{m} \sum_{j=1}^m \theta(x', j) < 0 ] \Big)$
        \ENDFOR
        \STATE Set $\theta(x, i) \leftarrow \theta'(x, i)$ for $i=1,\ldots,m$
        \STATE \textbf{return} $((\theta(y, i))_{i=1}^m : y \in \mathcal{X})$
    \end{algorithmic}
    \caption{PQTD update.}
    \label{alg:pqtd}
\end{algorithm}

The results of running PQTD on the same suite of environments as reported in Figure~\ref{fig:main-sweep} are given in Figure~\ref{fig:main-sweep-pqtd}, with improvements at optimised learning rates across a range of number of updates displayed in Figure~\ref{fig:main-sweep-pqtd-over-updates}. Overall, similar behaviour is observed with PQTD as with QTD: larger learning rates to TD are preferred, and the approach tends to work best in the presence of high environment stochasticity. However, the level of performance obtained is generally somewhat worse than QTD, and worse than TD in several stochastic environments too; this discrepancy provides a useful opportunity to understand the success of QTD better.

\begin{figure}[t]
    \centering
    
    \includegraphics[width=.48\textwidth]{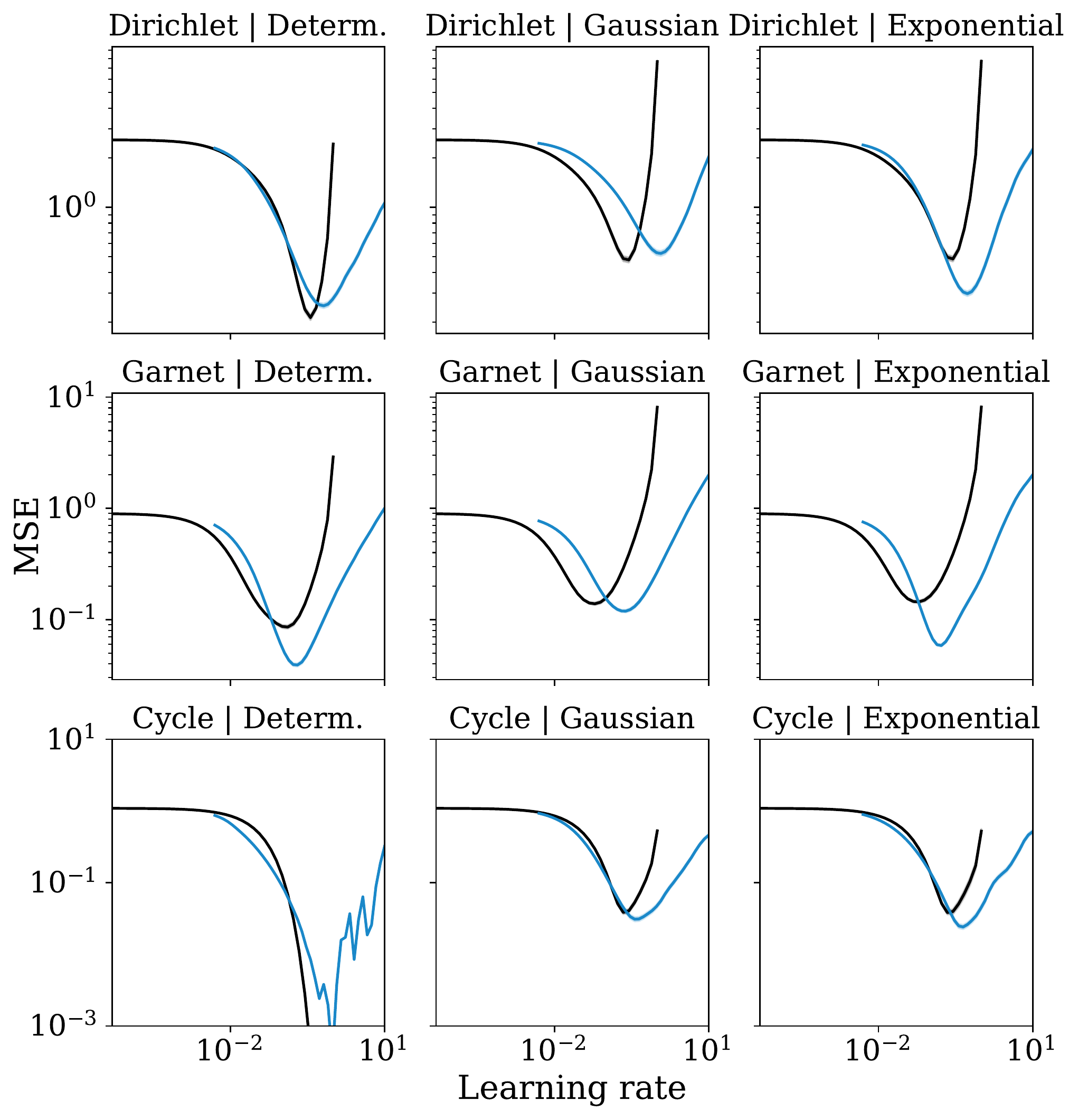}
    
    \caption{Mean-squared error against learning rate for TD (black) and PQTD(128) (blue).}
    \label{fig:main-sweep-pqtd}
\end{figure}

\begin{figure}[t]
    \centering
    
    \includegraphics[width=.48\textwidth]{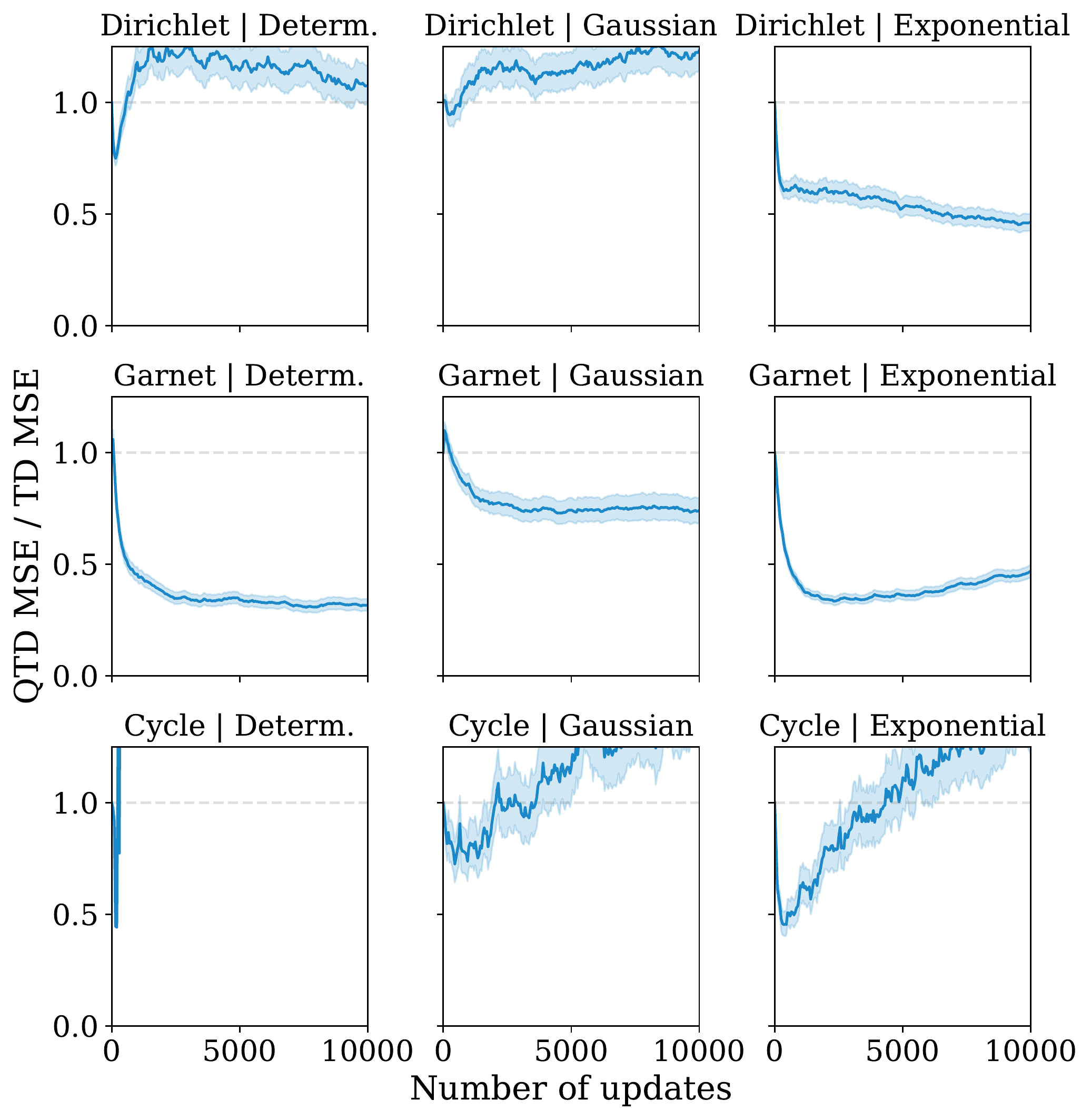}
    
    \caption{Relative improvement in MSE of PQTD over TD, with numbers of updates ranging from 0 to 10,000.}
    \label{fig:main-sweep-pqtd-over-updates}
\end{figure}

In particular, considering the cycle environment with Gaussian reward noise, the fixed-point bias for both QTD and PQTD is in fact zero in this case; for readers familiar with distributional dynamic programming \citep{bdr2022}, intuitively this follows from symmetry of the one-step target distributions, meaning that the average of the learnt quantiles is equal to the mean. Our earlier decomposition of the error therefore suggests that the discrepancy between QTD and PQTD must arise from finite-sample error, and points to differences in the update variance between the algorithms. Our empirical observations concur with this conjecture, with PQTD updates often having variance several times greater than those of QTD around the points of convergence. The form of the update for both QTD and PQTD is also informative; the term multiplying the learning rate in Algorithm~\ref{alg:pqtd} can take on only the values $\tau_i$ or $1-\tau_i$, whereas the averaging that occurs in the corresponding QTD update allows for significantly lower-magnitude updates, and hence potentially lower variance.

This finding suggests that a strength of QTD for value estimation is not only its bounded-magnitude updates, but the fact that variance of these updates is often significantly better than the bounds alone suggest, and that specifically learning the distribution of the full return can have beneficial variance-reduction properties in temporal-difference learning.

\section{Related work}\label{sec:related-work}

\textbf{Mean estimation with quantiles.} The approach of estimating a location parameter by averaging quantiles dates back at least to \citet{daniell1920observations}, who investigated the non-uniform averaging of order statistics to estimate a one-dimensional location parameter. \citet{mosteller1946some} developed this line of work further, investigating the statistical properties of averages of quantile estimates in greater detail. Interestingly, several proposals for which quantile levels should be averaged were made in this work, including the levels $\tau_i = \tfrac{2i-1}{2m}$ used by QTD, though without theoretical justification. \citet{gastwirth1966robust} also studied the efficiency of a mean estimator based on averaging of three specific quantiles for symmetric distributions with varying levels of heavy-tailedness. \citet{huber1964robust} proposed using smoothed versions of quantile losses for location estimation. See also \citet{andrews1972robustness} for a broader review of robust approaches to location estimation.
Online estimation of quantiles via incremental algorithms also has a long history; quantile estimation (in the supervised learning setting) is one of the examples provided by \citet{robbins1951stochastic} in their work introducing the field of stochsatic approximation.

\textbf{Deep quantile temporal-difference learning.} In addition to the original QTD algorithm \citep{dabney2018distributional}, recent theoretical developments \cite{lheritier2022cramer,rowland2023analysis}, and extensions in the context of deep reinforcement learning \citep{dabney2018implicit,yang2019fully}, several architectural innovations specifically exploiting neural network function approximation have been proposed to avoid the quantile-crossing problem when combining QTD with neural function approximation \citep{zhou2020non,luo2021distributional,theate2021distributional}.

\textbf{Distributional reinforcement learning algorithms.} In this paper, we have focused on quantile temporal-difference learning, a particular instance of a distributional reinforcement learning algorithm. Other distributional reinforcement learning algorithms include categorical temporal-difference learning \citep[CTD; ][]{bellemare2017distributional,rowland2018analysis}, maximum-mean discrepancy-based methods \citep{nguyen2021distributional}, methods using distributional representations based on mixtures of Gaussians \citep{barth2018distributed}, and methods using Sinkhorn divergences \citep{sun2022distributional}. It is interesting to contrast the finding that QTD is a strong algorithm for tabular policy evaluation, with properties that \emph{complement} those of TD, with prior findings relating to CTD \citep{rowland2018analysis,lyle2019comparative,bdr2022}. In contrast to QTD, this prior work showed that in many circumstances, CTD behaves \emph{identically} to TD for mean estimation, and so offers no additional benefit, or complementary profile of performance.

\textbf{Robust approaches to TD learning and optimisation.} 
A variety of approaches to robust and regularised variants of TD learning have been considered previously \citep{bossaerts2020exploiting,lu2021robust,meyer2021accelerated,klima2019robust, ghiassian2020gradient,liu2012regularized,manek2022pitfalls}. Bounded updates naturally arise from the QTD learning algorithm; bounded gradients are also commonly encountered in deep learning as a heuristic approach to stabilising optimisation through clipping \citep{mikolov2012statistical,pascanu2013difficulty}, as well as in fundamental optimisation algorithms \citep{riedmiller1993direct}. 

\section{Conclusion}

In this paper, we have shown that QTD can be viewed as a fundamental algorithm for policy evaluation, with complementary properties to the classical approach to temporal-difference learning. The theoretical and empirical analysis, as well as the introduction and study of the related algorithm PQTD, has given indications as to which kinds of environments we might expect one approach to improve over the other. We emphasise that these findings are of course not exhaustive, and we expect there to be significant value in further empirical investigation of QTD as a tabular policy evaluation algorithm, as well as analysis of variants incorporating aspects such as multi-step returns, off-policy corrections, and function approximation, all of which interact in various ways with the complementary trade-offs made by TD and QTD between fixed-point error, variance and expected update magnitude \citep{white2016greedy,mahmood2017multi,rowland2020adaptive}. Precise finite-sample bounds on performance are also a natural direction for future work.

These findings are also pertinent to the overarching questions as to where exactly the benefits of distributional RL stem from. Common hypotheses have often focused on the interaction between distributional predictions and non-linear function approximation, with mechanisms such as improved representation learning, prevention of rank collapse, and improved loss landscapes being proposed. This work highlights that even in risk-neutral tabular settings, there are benefits to taking a distributional approach to reinforcement learning, and opens up several directions of research to understand the role of distributional RL as a core technique in reinforcement learning. Historically, distributional RL algorithms have often been evaluated in (near-)deterministic environments; this paper also supports the idea that by evaluating algorithms on a wider range of environments, we may obtain a more nuanced view of the strengths and weaknesses of the algorithms at play.
Above all, this paper aims to show that distributional reinforcement learning has a fundamental role in developing algorithms that complement our existing approaches to core tasks such as policy evaluation; 
to estimate the mean, it can pay to estimate the full distribution.

\section*{Acknowledgements}

We thank David Abel for detailed comments on an earlier draft, and the reviewers \& area chair for their helpful comments on the paper.
The experiments in this paper were undertaken using
the Python 3 language, and made use of the NumPy \citep{harris2020array}, SciPy \citep{2020SciPy-NMeth}, and Matplotlib \citep{hunter2007matplotlib} libraries.

\clearpage

\bibliography{main}
\bibliographystyle{plainnat}

\clearpage

\onecolumn

\begin{appendix}

\section*{\centering APPENDICES}

We briefly summarise the contents of the appendices for convenience:
\begin{itemize}
    \item In Section~\ref{appendix:comp}, we provide further background on QTD, in particular describing the motivation for the form of the algorithm, key theoretical convergence results, and computational considerations.
    \item In Section~\ref{sec:proofs}, we provide proofs of all results stated in the main paper.
    \item In Section~\ref{sec:further-experiment-details}, we provide further details and context on the experimental results reported in the main paper.
    \item In Section~\ref{sec:app:further-experiments}, we provide further experimental results to complement those presented in the main paper.
\end{itemize}

\section{Further background on quantile temporal-difference learning}\label{appendix:comp}

As mentioned in the main paper, we encourage readers to consult \citet{rowland2023analysis} and \citet{bdr2022} for detailed background on the QTD algorithm. Here, we present a brief overview of key motivation, intuition, and theoretical results for the algorithm, following the general discussion in \citet{dabney2018distributional} and \citet{rowland2023analysis}.

\subsection{Motivation}

Quantile temporal-difference learning is motivated by the aim of learning certain quantiles of the return distribution at each state of the MDP.

\textbf{Quantiles.} For a probability distribution $\nu$ over the real numbers, with corresponding cumulative distribution function (CDF) $F$ which is continuous and strictly increasing, the $\tau$-quantile of $\nu$ (for $\tau \in (0,1)$) is the unique value $z$ such that $F(z) = \tau$. In other words, the $\tau$-quantile of this distribution is the value which a random sample from the distribution has probability exactly $\tau$ of being less than. In this way, quantiles can be thought of as an inverse to the CDF, and an alternative description of the probability distribution itself.

Our description above made the assumption that $F$ is continuous and strictly increasing. While this is true for many distributions of interest, such as Gaussians, there are also many distributions for which is is not true, such as distributions of random variables with only finitely many outcomes. The definition of $\tau$-quantile given above cannot apply directly to such CDFs, but the following generalisation applies to all distributions: For any distribution $\nu$ over the real numbers with corresponding cumulative distribution function (CDF) $F$, the \emph{set} of $\tau$-quantiles is defined to be the interval
\begin{align*}
    [ F^{-1}(\tau), \bar{F}^{-1}(\tau) ] \, ,
\end{align*}
where $F^{-1}(\tau) = \inf \{ z \in \mathbb{R} : F(z) \geq \tau \}$, and $\bar{F}^{-1}(\tau) = \inf \{ z \in \mathbb{R} : F(z) > \tau \}$.

\renewcommand*{\thefootnote}{\fnsymbol{footnote}}

\textbf{Quantile regression.} Quantile regression provides a means of approximately computing the quantiles of a distribution of interest, using samples from this distribution and running stochastic gradient descent on a particular loss function. Just as the mean of a probability distribution $\nu$ over the real numbers minimises the squared loss function $\theta \mapsto \mathbb{E}_{Z \sim \nu}[(Z - \theta)^2]$.\footnote{To be precise, this characterisation of the mean requires that $\nu$ have finite variance, so that the expectation appearing in the definition of the function is finite.}, the $\tau$-quantiles of a distribution $\nu$ are precisely the minimisers of the \emph{quantile regression loss}:
\begin{align}\label{eq:qr-loss}
    \theta \mapsto \mathbb{E}_{Z \sim \nu}[|Z - \theta| (\tau \mathbbm{1}[Z > \theta] + (1-\tau) \mathbbm{1}[Z < \theta])] \, ;
\end{align}
see e.g. \citet{koenker2005quantile} or \citet{koenker1978regression} for further background. Given an estimate $\theta$ for the $\tau$-quantile of $\nu$, and a sample $Z$ from $\theta$, a straightforward calculation shows that stochastic gradient descent (SGD) on this loss corresponds to the update
\begin{align}\label{eq:qr-sgd-update}
    \theta \leftarrow \theta + \alpha( \tau - \mathbbm{1}[Z < \theta]) \, ,
\end{align}
where $\alpha$ is the step size used in the SGD update. Since the quantile regression loss in  Equation~\eqref{eq:qr-loss} is convex in $\theta$, under mild conditions repeated application of the SGD update with a sequence of samples $(Z_i)_{i=1}^\infty$ drawn i.i.d.\ from $\nu$ results in the estimate $\theta$ converging to the set of $\tau$-quantiles with probability 1 (see e.g.\ \citeauthor{kushner2003stochastic}, \citeyear{kushner2003stochastic}), and this is therefore a sensible algorithm for learning quantiles from a streaming source of samples.

\textbf{QTD as a combination of quantile regression and bootstrapping. } The motivation behind QTD, as proposed by \citet{dabney2018distributional}, is to use the quantile regression update rule in Equation~\eqref{eq:qr-sgd-update} to learn estimates $(\theta(x, i))_{i=1}^m$ of quantiles $\tau_i = \tfrac{2i-1}{2m}$, $i=1,\ldots,m$ of the return distribution at each state $x \in \mathcal{X}$. If we had access to samples $\sum_{t\geq0} \gamma^t R_t$ drawn from the true distribution of returns at a state $x$, e.g. from full trajectories of interaction with the environment, then updates of the form given in Equation~\eqref{eq:qr-sgd-update} can be directly applied to each estimate $\theta(x, i)$ separately, leading to an update of the form
\begin{align*}
    \textstyle
    \theta(x, i) \leftarrow \theta(x, i) + \alpha( \tau_i - \mathbbm{1}[\sum_{t\geq0} \gamma^t R_t < \theta(x, i)]) \, .
\end{align*}
The idea behind QTD is to replace the random return $\sum_{t\geq0} \gamma^t R_t$ at the state $x$ with an alternative random variable that involves \emph{bootstrapping} (in the sense that the term is used in reinforcement learning, not statistics), in a similar way that classical temporal-difference learning modifies Monte Carlo learning \citep{sutton2018reinforcement}. Specifically, if the first transition in the trajectory beginning at state $x$ takes us to a new state $X'$, we can replace the portion of the discounted return from this state onwards by instead sampling one of our estimated quantiles of the return distribution at $X'$. Letting $J$ be a uniformly randomly chosen index in $\{1,\ldots,m\}$, this results in an update of the form
\begin{align*}
    \textstyle
    \theta(x, i) \leftarrow \theta(x, i) + \alpha( \tau_i - \mathbbm{1}[ R + \gamma \theta(X', J) < \theta(x, i)]) \, .
\end{align*}
Finally, averaging over the different possible choices for $J$ yields the update 
\begin{align*}
    \theta(x, i) \leftarrow \theta(x, i) + \alpha \Big( \tau_i -\frac{1}{m} \sum_{j=1}^m \mathbbm{1}[ R + \gamma \theta(X', j) < \theta(x, i)] \Big) \, ,
\end{align*}
which is precisely the QTD update presented in Algorithm~\ref{alg:qtd-mean}. Note that the new return estimator is generally not distributed according to the exact return, and so the QTD update does not directly inherit convergence guarantees from the stochastic gradient descent case described above. Issues of convergence are briefly covered in the section below.

In summary, we have motivated QTD as an algorithm that performs updates similar to those of stochastic gradient descent on the quantile regression, but which uses bootstrapped samples of the random return. This allows for, amongst other things, potential statistical benefits over Monte Carlo learning, as described in the recent work of \citet{cheikhi2023statistical}, as well as the computation of updates to quantile estimates after each transition in the environment, rather than needing to wait for full trajectories to become available in episodic environments.

\subsection{Convergence guarantees}\label{sec:app:convergence}

As the theoretical discussion in the main paper centres around convergence points of the QTD update, we review relevant convergence results here at the level of detail required for the paper; for full details, see \citet{rowland2023analysis}.

\citet{rowland2023analysis} show that under mild conditions, the estimates $(\theta_k(x, i))_{i=1}^m$ produced via $k$ QTD updates from some initial parameters $\theta_0$ are guaranteed to convergence with probability 1 as $k \rightarrow \infty$; note that these conditions include the requirement of decaying step sizes, as is often the case with stochastic approximation theory. Unlike classical temporal-difference learning, the estimates of QTD may converge to a \emph{set} of convergence points, rather than a single point; this phenomenon is related to the possibility of non-unique quantiles described above.

\citet{rowland2023analysis} show that each element of the set of convergence points is the fixed point of a projected distributional Bellman operator $\Pi^{\lambda} \mathcal{T}^\pi : \mathbb{R}^{\mathcal{X} \times [m]} \rightarrow \mathbb{R}^{\mathcal{X} \times [m]}$ that act on the quantile estimates. The precise details of these operators are used in the proof of Proposition~\ref{prop:abstract-fixed-point-quality} below, so we briefly recount their definitions and key properties. The distributional Bellman operator $\mathcal{T}^\pi$ is typically defined on the space of return-distribution functions $\mathscr{P}(\mathbb{R})^\mathcal{X}$ (that is, collections of distributions indexed by state), and outputs the corresponding distributional Bellman targets. Given $\eta \in \mathscr{P}(\mathbb{R})^\mathcal{X}$, we have
\begin{align*}
    (\mathcal{T}^\pi \eta)(x) = \mathcal{D}_\pi(R + \gamma G(X')) \, ,
\end{align*}
where $(G(y) : y \in \mathcal{X})$ are a collection of random variables with $G(y) \sim \eta(y)$ for all $y \in \mathcal{X}$, $(x, R, X')$ is a random transition beginning at $x$, and $\mathcal{D}_\pi$ extracts the distribution of the input random variable when the random transition is generated according to $\pi$. We can also overload this notation so that $\mathcal{T}^\pi$ takes quantile estimates $((\theta(x, i))_{i=1}^m : x \in \mathcal{X})$ as inputs, first mapping the quantile estimates to the return-distribution function $\eta$ given by
\begin{align*}
    \eta(x) = \sum_{i=1}^m \frac{1}{m}  \delta_{\theta(x, i)} \, ,
\end{align*}
and then applying the standard definition of the distributional Bellman operator above. With this convention, we have
\begin{align*}
    (\mathcal{T}^\pi \theta)(x) = \mathcal{D}_\pi(R + \gamma \theta(X', J)) \, ,
\end{align*}
where $J$ is a uniformly random index in $\{1,\ldots,m\}$. The projection operator $\Pi^\lambda$, parametrised by $\lambda \in [0,1]^{\mathcal{X} \times [m]}$, then extracts quantiles from the distributions defined by $\mathcal{T}^\pi \theta$, at the levels $\tau_i=\tfrac{2i-1}{2m}$, for $i=1,\ldots,m$. Mathematically, we have
\begin{align*}
    (\Pi^\lambda \mathcal{T}^\pi\theta )(x, i) = (1 - \lambda(x, i)) F^{-1}_{(\mathcal{T}^\pi \theta)(x)}(\tau_i) + \lambda(x, i) \bar{F}^{-1}_{(\mathcal{T}^\pi \theta)(x)}(\tau_i) \, ,
\end{align*}
where we write $F_\nu$ for the CDF associated with distribution $\nu$.
\citet{rowland2023analysis} show that $\Pi^\lambda \mathcal{T}^\pi$ is contractive under a certain Wasserstein metric, and by appealing to the Banach fixed point theorem in an appropriate space of return-distribution functions, show that it has a fixed point. The possibility of multiple convergence points as $\lambda$ varies is a distinctive property of QTD, and not present in non-distributional algorithms such as classical TD learning. The analysis in this paper relating to these convergence points, such as analysis of fixed-point error, applies to each such point individually, and therefore the question of uniqueness of convergence points is secondary.

\subsection{Computational complexity}

To complement the description of QTD given above, this section provides further discussion on the computational properties of the algorithm, drawing comparisons with TD. 

\textbf{Space complexity.} In addition to the linear scaling with the size of the state space that is common to tabular algorithms, QTD additionally scales linearly with the number of quantiles, $m$, to be predicted at each state.

\textbf{Time complexity.} An implementation of Algorithm~\ref{alg:qtd-mean} implementing the sum appearing in Line~\ref{algline:update} as a for-loop has a time complexity of $O(m^2 T)$. One factor of $m$ arises from the fact that there are $m$ quantiles to be updated at each state $x_t$, and the second factor of $m$ arises from the fact that there are $m$ quantiles at the target state $x'_t$ for which to compute the temporal-difference error.

However, the update for index $i$ in Line~\ref{algline:update} of Algorithm~\ref{alg:qtd-mean} depends only on whether $\gamma \theta(x'_t, j)$ is greater than $\theta(x_t, i) - r_t$ for each $j$. This can be exploited to produce an alternative implementation requiring only $O(T m \log m)$ time. This implementation first sorts the list $(\gamma \theta(x'_t, j))_{j=1}^m$ in time $O(m \log m)$. Then, for each index $i$, we can temporarily insert $\theta(x_t, i) - r_t$ into this sorted list at a cost of $O(\log m)$ (via e.g. binary search), and the index of the inserted term reveals exactly how many of the bootstrap terms $(\gamma \theta(x'_t, j))_{j=1}^m$ are less than $\theta(x_t, i) - r_t$, from which the update in Line~\ref{algline:update} of Algorithm~\ref{alg:qtd-mean} can be immediately computed. This leads to a per-update time complexity of $O(m \log m)$, improving on the $O(m^2)$ time complexity of the first implementation described above. In contrast to QTD, PQTD requires only linear time to compute each update.

\section{Proofs}\label{sec:proofs}

\subsection{Proof of Proposition~\ref{prop:qtd-fixed-point-bound}}

\propQTDFixedPoint*

\begin{proof}
    By Proposition~6.1 of \citet{rowland2023analysis}, we have that for any fixed point $\theta^{\text{QTD}}_m \in \mathbb{R}^{\mathcal{X} \times m}$ of QTD($m$), the corresponding return distribution estimate
    \begin{align*}
        \eta(x) = \frac{1}{m} \sum_{i=1}^m \delta_{\theta^{\text{QTD}}_m(x, i)}
    \end{align*}
    satisfies
    \begin{align*}
        w_1(\eta(x), \eta^\pi(x)) \leq \frac{R_\text{max} - R_\text{min}}{2m (1-\gamma)^2} \, ,
    \end{align*}
    for all $x \in \mathcal{X}$, where $\eta^\pi$ is the true return-distribution function, and $w_1$ is the Wasserstein-1 distance. Since the Wasserstein-1 distance between two distributions bounds the distance between their means \citep[see e.g. ][]{villani2009optimal}, we therefore have
    \begin{align*}
        \Big | \frac{1}{m} \sum_{i=1}^m \theta^{\text{QTD}}_m(x,i) - V^\pi(x)  \Big | \leq \frac{R_\text{max} - R_\text{min}}{2m (1-\gamma)^2} \, ,
    \end{align*}
    for all $x \in \mathcal{X}$, as required.
\end{proof}

\subsection{Proof of Proposition~\ref{prop:qtd-fixed-subgaussian}}

\propQTDSubGaussian*

We will establish the proof of this result by first deriving the more general Proposition~\ref{prop:abstract-fixed-point-quality}. The proof of this result modifies the approach taken in the proof of Proposition~6.1 from \citet{rowland2023analysis} in the case of bounded rewards.

\begin{proposition}\label{prop:abstract-fixed-point-quality}
    Consider an MDP with all reward distributions having $\tfrac{1}{2m}$-quantiles at least $\ubar{q}$ and $(1-\tfrac{1}{2m})$-quantiles at most $\bar{q}$.
    Then, for any convergence point $\theta^{\text{QTD}}_m$ of QTD($m$) with corresponding return-distribution function $\eta^{\text{QTD}}_m \in \mathscr{P}(\mathbb{R})^{\mathcal{X}}$, so that
    \begin{align*}
        \eta^{\text{QTD}}_m(x) = \sum_{i=1}^m \frac{1}{m} \delta_{\theta^{\text{QTD}}_m(x, i)} \, ,
    \end{align*}
    we have
    \begin{align*}
        w_1(\eta^{\text{QTD}}_m(x), \eta^\pi(x)) \leq \frac{1}{1-\gamma}\left( \frac{\bar{q} - \ubar{q}}{2m(1-\gamma)} + \mathbb{E}^\pi_x[ (R - \bar{q}) \mathbbm{1}[ R > \bar{q}]] -   \mathbb{E}^\pi_x[ (R - \ubar{q}) \mathbbm{1}[ R < \ubar{q} ] ] \right)
    \end{align*}
    for each $x \in \mathcal{X}$.
\end{proposition}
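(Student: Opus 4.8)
The plan is to exploit the characterisation recalled in Appendix~\ref{sec:app:convergence}: any convergence point $\theta^{\text{QTD}}_m$ is a fixed point of some projected distributional Bellman operator $\Pi^\lambda \mathcal{T}^\pi$, so writing $\nu_x := (\mathcal{T}^\pi \eta^{\text{QTD}}_m)(x) = \mathcal{D}_\pi(R + \gamma \theta^{\text{QTD}}_m(X', J))$ for the distributional Bellman target, we have $\eta^{\text{QTD}}_m(x) = \Pi^\lambda \nu_x$. First I would apply the triangle inequality together with the fact, established by \citet{rowland2023analysis}, that $\mathcal{T}^\pi$ is a $\gamma$-contraction in the supremal metric $\bar w_1(\eta,\eta') = \sup_x w_1(\eta(x),\eta'(x))$ with fixed point $\eta^\pi$:
\begin{align*}
    w_1(\eta^{\text{QTD}}_m(x), \eta^\pi(x)) \leq w_1(\Pi^\lambda \nu_x, \nu_x) + \gamma \, \mathbb{E}^\pi_x[ w_1(\eta^{\text{QTD}}_m(X_1), \eta^\pi(X_1)) ] \, .
\end{align*}
Unrolling this recursion (or, equivalently, rearranging after a supremum over states) introduces the overall factor $(1-\gamma)^{-1}$ and reduces everything to bounding the one-step projection error $w_1(\Pi^\lambda \nu_x, \nu_x)$.

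For the projection error, the key lemma I would establish is that projecting any finite-mean distribution $\nu$ onto the midpoint quantile levels $\tau_i = \tfrac{2i-1}{2m}$ costs at most
\begin{align*}
    w_1(\Pi^\lambda \nu, \nu) \leq \int_{1-\frac{1}{2m}}^1 F_\nu^{-1}(\tau)\,\mathrm{d}\tau \, - \int_0^{\frac{1}{2m}} F_\nu^{-1}(\tau)\,\mathrm{d}\tau \, .
\end{align*}
This follows by writing $w_1(\Pi^\lambda \nu,\nu) = \sum_{i=1}^m \int_{(i-1)/m}^{i/m} |z_i - F_\nu^{-1}(\tau)|\,\mathrm{d}\tau$ (with $z_i$ the $\tau_i$-quantile selected by $\lambda$), splitting each interval at its midpoint $\tau_i$ and using monotonicity of $F_\nu^{-1}$: the contributions of all the ``interior'' half-intervals telescope to $\tfrac{1}{2m}(F_\nu^{-1}(1-\tfrac{1}{2m}) - F_\nu^{-1}(\tfrac{1}{2m}))$, while the two extreme half-intervals produce exactly the displayed tail integrals. (Non-uniqueness of quantiles, i.e.\ $\lambda_{x,i}>0$, needs a little care but only perturbs the bound through terms supported on atoms, which do not affect the final estimate.)

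The remaining work is to evaluate this with $\nu = \nu_x = \mathcal{D}_\pi(R + \gamma G')$, $G' = \theta^{\text{QTD}}_m(X',J)$. I would use the expected-shortfall identity $\int_{1-p}^1 F^{-1}_Z(\tau)\,\mathrm{d}\tau = \sup_{\mathbb{P}(A)=p}\mathbb{E}[Z \mathbbm{1}_A]$ to split the tail integrals of $\nu_x$ additively into a reward part and a bootstrap part. The reward part is controlled by the hypotheses $F^{-1}_{R}(\tfrac{1}{2m}) \geq \ubar q$ and $\bar F^{-1}_R(1-\tfrac{1}{2m}) \leq \bar q$, yielding $\mathbb{E}^\pi_x[(R-\bar q)\mathbbm{1}[R>\bar q]]$, $-\mathbb{E}^\pi_x[(R-\ubar q)\mathbbm{1}[R<\ubar q]]$ together with residuals $\pm \tfrac{\bar q}{2m}$, $\mp\tfrac{\ubar q}{2m}$; the bootstrap part is at most $\tfrac{\gamma}{2m}(\bar V - \ubar V)$, where $\bar V := \sup_{x,i}\theta^{\text{QTD}}_m(x,i)$ and $\ubar V := \inf_{x,i}\theta^{\text{QTD}}_m(x,i)$. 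The crucial ingredient is an a priori bound on the \emph{range} of the fixed point: since each $\theta^{\text{QTD}}_m(x,i)$ is a $\tau_i$-quantile of $\nu_x$ with $\tfrac{1}{2m} \leq \tau_i \leq 1-\tfrac{1}{2m}$, and $\nu_x$ is the reward distribution shifted by $\gamma G' \in [\gamma \ubar V, \gamma \bar V]$, stochastic-dominance arguments give $\ubar V \geq \ubar q + \gamma \ubar V$ and $\bar V \leq \bar q + \gamma \bar V$, hence $\bar V - \ubar V \leq (\bar q - \ubar q)/(1-\gamma)$ (finiteness of $\bar V, \ubar V$ following from the same self-consistency and the contractivity of $\Pi^\lambda\mathcal{T}^\pi$). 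Substituting, the $\tfrac{1}{2m}(\bar q - \ubar q)$ quadrature term and the $\tfrac{\gamma}{2m}(\bar V - \ubar V)$ bootstrap term combine to exactly $\tfrac{\bar q - \ubar q}{2m(1-\gamma)}$, the $\pm\tfrac{\bar q}{2m}, \mp\tfrac{\ubar q}{2m}$ residuals cancel into the nonnegative $\tfrac{\bar q - \ubar q}{2m}$ and are absorbed, and collecting terms gives the bound on $w_1(\Pi^\lambda\nu_x,\nu_x)$; the first paragraph then finishes the proof.

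The step I expect to be the main obstacle is this last assembly: getting the $\gamma$-scaled bootstrap contribution to combine \emph{exactly} with the quadrature term requires the sharp range bound $\bar V - \ubar V \leq (\bar q - \ubar q)/(1-\gamma)$ (rather than a looser constant), and more generally one must track several $\mathcal{O}(1/m)$ residual terms so that they fit into the clean stated expression. The projection lemma, especially the bookkeeping around non-unique quantiles, is the other place where care is needed. Finally, note that under the stated hypotheses the bound may be vacuous if the reward means are not finite, so the interesting case implicitly has $\mathbb{E}^\pi_x[|R|]<\infty$, which is exactly what guarantees that all the integrals and expectations above are finite.
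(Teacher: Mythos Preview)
Your proposal is correct and reaches the stated bound, but the route you take to control the one-step projection error $w_1(\Pi^\lambda \nu_x,\nu_x)$ differs from the paper's. The paper establishes the support bound $\theta^{\text{QTD}}_m(x,i)\in[\ubar q/(1-\gamma),\bar q/(1-\gamma)]$ by iterating $\Pi^\lambda\mathcal{T}^\pi$ from $\delta_0$ (an inductive version of your self-consistency inequality $\bar V\leq\bar q+\gamma\bar V$), then introduces a \emph{clipping} operator $\Pi_B$ onto that interval and decomposes $w_1(\Pi^\lambda\nu_x,\nu_x)\leq w_1(\Pi^\lambda\nu_x,\Pi_B\nu_x)+w_1(\Pi_B\nu_x,\nu_x)$. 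Because at most $1/(2m)$ of the mass of $\nu_x$ lies on either side of the clipping interval, $\Pi^\lambda\Pi_B\nu_x=\Pi^\lambda\nu_x$, and the first term is bounded by the bounded-support result of \citet{rowland2023analysis}, giving $\tfrac{\bar q-\ubar q}{2m(1-\gamma)}$; the second term is the clipping transport cost, yielding the two tail expectations directly.

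Your argument instead proves the quantile-projection inequality $w_1(\Pi^\lambda\nu,\nu)\leq\int_{1-1/(2m)}^1 F_\nu^{-1}-\int_0^{1/(2m)}F_\nu^{-1}$ (your worry about non-unique quantiles is in fact unnecessary: the $z_i$-contributions on the two half-intervals cancel exactly for any $z_i$ in the quantile set), and then uses the expected-shortfall identity to separate reward and bootstrap tails. This is more self-contained, as it does not invoke the bounded-support projection bound as a black box, and it makes the role of CVaR-type quantities explicit; the paper's clipping decomposition is more geometric and makes the two error sources (quadrature on a bounded interval versus tail mass) visually separate. Both routes need the sharp range bound $\bar V-\ubar V\leq(\bar q-\ubar q)/(1-\gamma)$ to close, and both ultimately control the sup-Wasserstein distance and then specialise to each state, so the per-state form of the conclusion with $\mathbb{E}^\pi_x$ should be read with that in mind.
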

\begin{proof}
    As described in Section~\ref{sec:app:convergence} (see also \citeauthor{rowland2023analysis}, \citeyear{rowland2023analysis}), let $\Pi^\lambda \mathcal{T}^\pi$ be a projected distributional Bellman operator with $\theta^{\text{QTD}}_m$ as a fixed point, so that $\Pi^\lambda \mathcal{T}^\pi \eta^{\text{QTD}}_m = \eta^{\text{QTD}}_m$. Defining the sup-Wasserstein-1 distance on $\mathscr{P}(\mathbb{R})^{\mathcal{X}}$ (see e.g. Chapter~4, \citeauthor{bdr2022}, \citeyear{bdr2022}) by
    \begin{align*}
        \overline{w}_1(\eta, \eta) = \max_{x \in \mathcal{X}} w_1(\eta(x), \eta'(x)) \, ,
    \end{align*}
    we have
    \begin{align*}
        \overline{w}_1(\eta^{\text{QTD}}_m, \eta^\pi) \leq & \overline{w}_1(\eta^{\text{QTD}}_m, \mathcal{T}^\pi \eta^{\text{QTD}}_m) + \overline{w}_1(\mathcal{T}^\pi \eta^{\text{QTD}}_m, \eta^\pi) \\
        = & \overline{w}_1(\Pi^\lambda \mathcal{T}^\pi \eta^{\text{QTD}}_m, \mathcal{T}^\pi \eta^{\text{QTD}}_m) + \overline{w}_1(\mathcal{T}^\pi \eta^{\text{QTD}}_m, \mathcal{T}^\pi \eta^\pi) \\
        = & \overline{w}_1(\Pi^\lambda \mathcal{T}^\pi \eta^{\text{QTD}}_m, \mathcal{T}^\pi \eta^{\text{QTD}}_m) + \gamma \overline{w}_1(\eta^{\text{QTD}}_m, \eta^\pi) \, ,
    \end{align*}
    with the first line following from the triangle inequality, 
    the first equality following since $\Pi^\lambda \mathcal{T}^\pi \eta^{\text{QTD}}_m = \eta^{\text{QTD}}_m$ and $\eta^\pi = \mathcal{T}^\pi \eta^\pi$, and the second line following from $\gamma$-contractivity of $\mathcal{T}^\pi$ in $\overline{w}_1$ \citep[see e.g.\ Proposition~4.15, ][]{bdr2022}.
    Rearranging, we obtain
    \begin{align*}
        \overline{w}_1(\eta^{\text{QTD}}_m, \eta^\pi) \leq \frac{1}{1 - \gamma}\overline{w}_1(\Pi^\lambda \mathcal{T}^\pi \eta^{\text{QTD}}_m, \mathcal{T}^\pi \eta^{\text{QTD}}_m) \, .
    \end{align*}
    To bound the right-hand term, we first reason about the support of $\eta^{\text{QTD}}_m$. Writing $\eta_0(x) = \delta_0$ for all $x \in \mathcal{X}$, and inductively defining $\eta_{k+1} = \Pi^\lambda \mathcal{T}^\pi \eta_k$ for $k \geq 0$, we have $\eta_{k+1} \rightarrow \eta^{\text{QTD}}_m$ in $\overline{w}_1$. We now prove by induction that $\eta_k(x)$ is supported on $[\ubar{q}\tfrac{1-\gamma^k}{1-\gamma}, \bar{q}\tfrac{1-\gamma^k}{1-\gamma}]$ for all $x \in \mathcal{X}$ and $k \geq 0$, from which it follows that $\eta^{\text{QTD}}_m(x)$ is supported on $[\ubar{q}\tfrac{1}{1-\gamma}, \bar{q}\tfrac{1}{1-\gamma}]$ for all $x \in \mathcal{X}$. The claim is straightforward to see for $k=0,1$. For the inductive step, we suppose $\eta_k(x)$ is supported on $[\ubar{q}\tfrac{1-\gamma^k}{1-\gamma}, \bar{q}\tfrac{1-\gamma^k}{1-\gamma}]$. Now, an instantiation of $(\mathcal{T}^\pi\eta_{k})(x)$ is given by $R + \gamma G$, where $(R, X') \sim P^\pi(\cdot|x)$, and $G | R, X' \sim \eta(X')$. Since $G \geq \ubar{q}\tfrac{1-\gamma^k}{1-\gamma}$ almost surely by hypothesis, we have $R + \gamma G \geq R + \gamma \ubar{q}\tfrac{1-\gamma^k}{1-\gamma}$ almost surely. By the definition of $\ubar{q}$, we have that the $1/(2m)$-quantile of this distribution is bounded below by $\ubar{q} + \gamma \ubar{q}\tfrac{1-\gamma^k}{1-\gamma} = \ubar{q}\tfrac{1-\gamma^{k+1}}{1-\gamma}$, and hence the support of $\Pi^\lambda \mathcal{T}^\pi \eta_k$ is bounded below by $\ubar{q}\tfrac{1-\gamma^{k+1}}{1-\gamma}$. The argument for the upper bound on the support is analogous, and the inductive claim is proven.
    
    Now, define $\Pi_B : \mathscr{P}(\mathbb{R}) \rightarrow \mathscr{P}(\mathbb{R})$ as the transformation of distributions that ``clips'' the support of a distribution to lie on the interval $[\ubar{q}\tfrac{1}{1-\gamma}, \bar{q}\tfrac{1}{1-\gamma}]$. Formally, if given a probability distribution $\nu \in \mathscr{P}(\mathbb{R})$ we write $\tilde{\nu}$ for its \emph{restriction} to the interval $(\ubar{q}\tfrac{1}{1-\gamma}, \bar{q}\tfrac{1}{1-\gamma})$, so that $\tilde{\nu}(A) = \nu(A \cap (\ubar{q}\tfrac{1}{1-\gamma}, \bar{q}\tfrac{1}{1-\gamma}))$, then $\Pi_B \nu$ is given by
    \begin{align*}
        \Pi_B \nu = \nu((-\infty, \ubar{q}\tfrac{1}{1-\gamma}]) \delta_{\ubar{q}\tfrac{1}{1-\gamma}} + \tilde{\nu} + \nu([ \bar{q}\tfrac{1}{1-\gamma}, \infty)) \delta_{\bar{q}\tfrac{1}{1-\gamma}} \, .
    \end{align*}
    Equivalently, $\Pi_B$ can be defined as the pushforward of $\nu$ through the function $f(z) = \max(\min(z, \bar{q}\tfrac{1}{1-\gamma}), \ubar{q}\tfrac{1}{1-\gamma})$, i.e. $\Pi_B \nu = f_\# \nu$. We then use the triangle inequality to write
    \begin{align*}
        \overline{w}_1(\Pi^\lambda \mathcal{T}^\pi \eta^{\text{QTD}}_m, \mathcal{T}^\pi \eta^{\text{QTD}}_m) \leq \overline{w}_1(\Pi^\lambda \mathcal{T}^\pi \eta^{\text{QTD}}_m, \Pi_{B} \mathcal{T}^\pi \eta^{\text{QTD}}_m) + \overline{w}_1(\Pi_{B} \mathcal{T}^\pi \eta^{\text{QTD}}_m, \mathcal{T}^\pi \eta^{\text{QTD}}_m) \, .
    \end{align*}
    Now, each distribution $(\mathcal{T}^\pi \eta^{\text{QTD}}_m)(x)$ has at most $1/(2m)$ mass in the region $(-\infty, \ubar{q}/(1-\gamma)]$, and similarly for the region $[\bar{q}/(1-\gamma), \infty)$, so $\Pi^\lambda \Pi_B \mathcal{T}^\pi \eta^{\text{QTD}}_m(x) = \Pi^\lambda \mathcal{T}^\pi \eta^{\text{QTD}}_m(x)$, and so we obtain the bound
    \begin{align*}
        \overline{w}_1(\Pi^\lambda \mathcal{T}^\pi \eta^{\text{QTD}}_m, \Pi_{B} \mathcal{T}^\pi \eta^{\text{QTD}}_m) \leq \frac{\bar{q} - \ubar{q}}{2m(1-\gamma)}
    \end{align*}
    from the argument given by \citet[Proposition~6.1, ][]{rowland2023analysis}.
    To quantify the contribution of the clipping to the second Wasserstein distance term, we first bound the transport cost for the upper clipping as
    \begin{align*}
        & \mathbb{E}[(R + \gamma \theta(X', J) - \bar{q}/(1-\gamma)) \mathbbm{1}[R + \gamma \theta(X', J) > \bar{q}/(1-\gamma)]] \\
        \leq & \mathbb{E}[(R + \gamma \bar{q}/(1-\gamma) - \bar{q}/(1-\gamma)) \mathbbm{1}[R + \gamma \theta(X', J) > \bar{q}/(1-\gamma)]] \\
        \leq & \mathbb{E}[(R + \gamma \bar{q}/(1-\gamma) - \bar{q}/(1-\gamma)) \mathbbm{1}[R + \gamma \bar{q}/(1-\gamma) > \bar{q}/(1-\gamma)]] \\
        = & \mathbb{E}[(R - \bar{q}) \mathbbm{1}[R > \bar{q}]] \, .
    \end{align*}
    The bound for the transport cost associated with the lower clipping is derived analogously, and the result follows.
\end{proof}

\begin{corollary}\label{corr:value-bound}
    Under the same conditions as Proposition~\ref{prop:abstract-fixed-point-quality}, we have that $V^{\text{QTD}}_m$, the value prediction obtained from the fixed point $\theta^{\text{QTD}}_m$, satisfies 
    \begin{align*}
       |V^{\text{QTD}}(x) - V^\pi(x)| \leq \frac{1}{1-\gamma}\left( \frac{\bar{q} - \ubar{q}}{2m(1-\gamma)} + \mathbb{E}^\pi_x[ (R - \bar{q}) \mathbbm{1}[ R > \bar{q}]] -   \mathbb{E}^\pi_x[ (R - \ubar{q}) \mathbbm{1}[ R < \ubar{q} ] ] \right)
    \end{align*}
    for each $x \in \mathcal{X}$.
\end{corollary}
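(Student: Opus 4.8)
The plan is to derive this as an immediate consequence of Proposition~\ref{prop:abstract-fixed-point-quality}, using only the fact that the Wasserstein-$1$ distance dominates the distance between means. First I would observe that, by construction, the value prediction extracted from the fixed point is exactly the mean of the associated return-distribution estimate: $V^{\text{QTD}}_m(x) = \tfrac{1}{m}\sum_{i=1}^m \theta^{\text{QTD}}_m(x,i) = \mathbb{E}_{Z \sim \eta^{\text{QTD}}_m(x)}[Z]$, since $\eta^{\text{QTD}}_m(x) = \sum_{i=1}^m \tfrac1m \delta_{\theta^{\text{QTD}}_m(x,i)}$. Likewise $V^\pi(x) = \mathbb{E}_{Z \sim \eta^\pi(x)}[Z]$ is the mean of the true return distribution at $x$.

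Next I would invoke the standard fact (already used in the proof of Proposition~\ref{prop:qtd-fixed-point-bound}, and available in e.g.\ \citet{villani2009optimal}) that for any two distributions $\nu, \nu'$ with finite first moments, $|\mathbb{E}_{Z\sim\nu}[Z] - \mathbb{E}_{Z\sim\nu'}[Z]| \leq w_1(\nu,\nu')$; this is immediate from the Kantorovich--Rubinstein dual characterisation of $w_1$, taking the $1$-Lipschitz test function $z \mapsto z$. Applying this with $\nu = \eta^{\text{QTD}}_m(x)$ and $\nu' = \eta^\pi(x)$ gives $|V^{\text{QTD}}_m(x) - V^\pi(x)| \leq w_1(\eta^{\text{QTD}}_m(x), \eta^\pi(x))$.

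Finally I would substitute the bound on $w_1(\eta^{\text{QTD}}_m(x), \eta^\pi(x))$ supplied by Proposition~\ref{prop:abstract-fixed-point-quality}, namely $\tfrac{1}{1-\gamma}\big( \tfrac{\bar q - \ubar q}{2m(1-\gamma)} + \mathbb{E}^\pi_x[(R-\bar q)\mathbbm{1}[R>\bar q]] - \mathbb{E}^\pi_x[(R-\ubar q)\mathbbm{1}[R<\ubar q]] \big)$, to obtain the claimed inequality for each $x \in \mathcal{X}$. There is essentially no obstacle here: the only point requiring a word of care is confirming that the finiteness hypotheses needed for the mean-domination step hold, which follows because Proposition~\ref{prop:abstract-fixed-point-quality}'s proof already establishes that $\eta^{\text{QTD}}_m(x)$ has bounded support (contained in $[\ubar q/(1-\gamma), \bar q/(1-\gamma)]$) and the right-hand side of the bound being finite forces $\eta^\pi(x)$ to have a finite mean as well.
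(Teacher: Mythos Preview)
Your proposal is correct and follows essentially the same route as the paper's own proof: observe that $V^{\text{QTD}}_m(x)$ and $V^\pi(x)$ are the means of $\eta^{\text{QTD}}_m(x)$ and $\eta^\pi(x)$, invoke the standard fact that $w_1$ dominates the difference of means, and then substitute the bound from Proposition~\ref{prop:abstract-fixed-point-quality}. Your write-up is in fact slightly more detailed than the paper's, which simply refers back to the proof of Proposition~\ref{prop:qtd-fixed-point-bound} for the mean-domination step.
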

\begin{proof}
    As in the proof of Proposition~\ref{prop:qtd-fixed-point-bound}, we use the fact that the Wasserstein-1 distance between two distributions bounds the difference in mean of the distributions, and the result now follows from Proposition~\ref{prop:abstract-fixed-point-quality}.
\end{proof}

Proposition~\ref{prop:abstract-fixed-point-quality} is stated abstractly, and to obtain concrete bounds for reward distributions of interest, it is necessary to obtain bounds on the quantiles and conditional expectations that feature in the result. We give a concrete result below for sub-Gaussian distributions, which proves Proposition~\ref{prop:qtd-fixed-subgaussian}.

\begin{proposition}\label{prop:general-subgaussian}
    Consider an MDP with all reward distributions having means in $[R_{\text{min}}, R_{\text{max}}]$, and all sub-Gaussian with parameter $\sigma^2$, so that $\mathbb{E}^\pi_x[\exp(\lambda (R-\mathbb{E}^\pi_x[R]))] \leq \exp(\lambda^2 \sigma^2/2)$, for all $x \in \mathcal{X}$. Then
    \begin{align*}
        \overline{w}_1(\eta^{\text{QTD}}_m, \eta^\pi)  \leq \frac{1}{(1-\gamma)m} \left( \frac{R_\text{max} - R_\text{min} + 2 \sigma \sqrt{2 \log (2m)}}{2(1-\gamma)} + \frac{\sigma}{\sqrt{2 \log(2m)}}  \right) \, ,
    \end{align*}
    and 
    \begin{align*}
        \| \hat{V} - V^\pi \|_\infty  \leq \frac{1}{(1-\gamma)m} \left( \frac{R_\text{max} - R_\text{min} + 2 \sigma \sqrt{2 \log (2m)}}{2(1-\gamma)} + \frac{\sigma}{\sqrt{2 \log(2m)}}  \right) \, .
    \end{align*}
\end{proposition}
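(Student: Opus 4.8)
The plan is to instantiate the abstract fixed-point bound of Proposition~\ref{prop:abstract-fixed-point-quality}, together with its value-function consequence Corollary~\ref{corr:value-bound}, with appropriate choices of the quantile bounds $\ubar{q}$ and $\bar{q}$, and then to control the two tail-expectation terms that appear on its right-hand side. Concretely there are three ingredients: (i) a quantile bound for sub-Gaussian rewards, which controls $\bar q - \ubar q$; (ii) a bound on the conditional expectations $\mathbb{E}^\pi_x[(R-\bar q)\mathbbm{1}[R>\bar q]]$ and $-\mathbb{E}^\pi_x[(R-\ubar q)\mathbbm{1}[R<\ubar q]]$; and (iii) an elementary substitution into Proposition~\ref{prop:abstract-fixed-point-quality}, followed by taking a maximum over states and invoking Corollary~\ref{corr:value-bound}.

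For (i), I would use the standard one-sided sub-Gaussian tail inequality: applying Markov's inequality to $\exp(\lambda(R-\mathbb{E}^\pi_x[R]))$ and optimising over $\lambda > 0$, the hypothesis yields $\mathbb{P}^\pi_x(R - \mathbb{E}^\pi_x[R] > t) \le \exp(-t^2/(2\sigma^2))$ for all $t \ge 0$, and symmetrically for the left tail. Setting the right-hand side equal to $1/(2m)$ gives $t = \sigma\sqrt{2\log(2m)}$, so the $(1-\tfrac{1}{2m})$-quantile of each reward distribution is at most $\mathbb{E}^\pi_x[R] + \sigma\sqrt{2\log(2m)} \le R_\text{max} + \sigma\sqrt{2\log(2m)} =: \bar q$, and the $\tfrac{1}{2m}$-quantile is at least $R_\text{min} - \sigma\sqrt{2\log(2m)} =: \ubar q$, whence $\bar q - \ubar q = R_\text{max} - R_\text{min} + 2\sigma\sqrt{2\log(2m)}$. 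This step requires no symmetry or density assumptions on the reward distributions — the one-sided tail bound alone suffices, which is why the conclusion holds for arbitrary sub-Gaussian rewards.

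For (ii), I would expand the tail expectation via the layer-cake identity, $\mathbb{E}^\pi_x[(R-\bar q)\mathbbm{1}[R>\bar q]] = \int_0^\infty \mathbb{P}^\pi_x(R > \bar q + t)\,\mathrm{d}t$, bound the integrand using the sub-Gaussian tail together with $\bar q - \mathbb{E}^\pi_x[R] \ge \sigma\sqrt{2\log(2m)} =: a$, and substitute to reduce to $\int_a^\infty \exp(-s^2/(2\sigma^2))\,\mathrm{d}s$. The key (and only mildly delicate) estimate is $\int_a^\infty \exp(-s^2/(2\sigma^2))\,\mathrm{d}s \le \tfrac{\sigma^2}{a}\exp(-a^2/(2\sigma^2))$, obtained from $s/a \ge 1$ on $[a,\infty)$; with $a = \sigma\sqrt{2\log(2m)}$ one has $a^2/(2\sigma^2) = \log(2m)$ and $\sigma^2/a = \sigma/\sqrt{2\log(2m)}$, so this term is at most $\tfrac{\sigma}{2m\sqrt{2\log(2m)}}$. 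The lower-tail term $\mathbb{E}^\pi_x[(\ubar q - R)\mathbbm{1}[R<\ubar q]]$ is bounded identically, contributing another $\tfrac{\sigma}{2m\sqrt{2\log(2m)}}$, so together the two tail terms contribute $\tfrac{\sigma}{m\sqrt{2\log(2m)}}$.

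Substituting $\bar q - \ubar q$ and these tail bounds into Proposition~\ref{prop:abstract-fixed-point-quality} gives, for each $x \in \mathcal{X}$,
\[
    w_1(\eta^{\text{QTD}}_m(x), \eta^\pi(x)) \le \frac{1}{(1-\gamma)m}\Big( \frac{R_\text{max} - R_\text{min} + 2\sigma\sqrt{2\log(2m)}}{2(1-\gamma)} + \frac{\sigma}{\sqrt{2\log(2m)}} \Big),
\]
which is exactly the claimed $\overline{w}_1$ bound after taking the maximum over $x$; the value-error bound follows by applying the same substitution to Corollary~\ref{corr:value-bound} (equivalently, using that Wasserstein-1 distance dominates the difference of means), proving Proposition~\ref{prop:general-subgaussian} and hence Proposition~\ref{prop:qtd-fixed-subgaussian}. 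I expect the main obstacle to be purely bookkeeping: choosing the Gaussian-tail estimate in precisely the form that makes the clean factor $\sigma/\sqrt{2\log(2m)}$ emerge, and noting $\log(2m) > 0$ for $m \ge 1$ so there is no division by zero; beyond that, the argument is a direct specialisation of the already-established abstract bound.
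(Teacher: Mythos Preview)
Your proposal is correct and follows essentially the same route as the paper: choosing $\bar q = R_\text{max} + \sigma\sqrt{2\log(2m)}$ and $\ubar q = R_\text{min} - \sigma\sqrt{2\log(2m)}$ via the sub-Gaussian tail bound, reducing the tail expectation to $\int_a^\infty \exp(-s^2/(2\sigma^2))\,\mathrm{d}s$ with $a=\sigma\sqrt{2\log(2m)}$ (the paper states this step tersely; your layer-cake justification is exactly what underlies it), applying the $s/a\ge 1$ trick to obtain $\tfrac{\sigma}{2m\sqrt{2\log(2m)}}$ per tail, and substituting into Proposition~\ref{prop:abstract-fixed-point-quality} and Corollary~\ref{corr:value-bound}. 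There is no substantive difference in method or bookkeeping.
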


\begin{proof}
    By a standard conversion of the sub-Gaussian moment-generating function condition into a concentration inequality, we obtain bounds for $\bar{q}$ and $\ubar{q}$ in Proposition~\ref{prop:abstract-fixed-point-quality} of the form
    \begin{align*}
        \bar{q} = R_{\text{max}} + \sigma \sqrt{2 \log (2m)} \, , \quad \ubar{q} = R_{\text{min}} - \sigma \sqrt{2 \log (2m)} \, .
    \end{align*}
    Next, we can also compute a bound on the expectations appearing in Proposition~\ref{prop:abstract-fixed-point-quality} as follows.
    \begin{align*}
        \mathbb{E}[(R - \bar{q}) \mathbbm{1}[R > \bar{q}]]
        \leq & \int_{\sigma \sqrt{2 \log (2m)}}^\infty \exp(-t^2/(2\sigma^2)) \mathrm{d}t \\
        \leq & \int_{\sigma \sqrt{2 \log (2m)}}^\infty \frac{t}{\sigma \sqrt{2 \log (2m)}} \exp(-t^2/(2\sigma^2)) \mathrm{d}t \\
        = & \frac{\sigma}{\sqrt{2 \log(2m)}} \exp( - (\sigma \sqrt{2 \log(2m)})^2 / (2\sigma^2) ) \\
        = & \frac{\sigma}{2m\sqrt{2 \log(2m)}} \, ,
    \end{align*}
    where the first inequality follows from the bounds on the quantiles and CDF tails for the sub-Gaussian reward distribution, and the second inequality is a standard trick for bounding tails of a Gaussian CDF. The same bound for the second expectation is derived analogously, and we obtain the required statement by substituting into the expression in Proposition~\ref{prop:abstract-fixed-point-quality}.
\end{proof}

\subsection{Proof of Proposition~\ref{prop:qtd-finite-mean}}

\propQTDFiniteMean*

To prove this result, we first require the following lemma, which controls how fast the tails of a distribution's CDF can decay if the distribution has finite mean.

\begin{lemma}\label{lem:cdf-bounds}
    If $F$ is the CDF of a probability distribution over the real numbers with finite mean, then we must have
    \begin{align*}
        F^{-1}(1 - \tfrac{1}{2m}) = o(m) \, , \quad F^{-1}(\tfrac{1}{2m}) = o(m) \, .
    \end{align*}
\end{lemma}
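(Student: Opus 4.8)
The plan is to show that if $F^{-1}(1-\tfrac{1}{2m})$ fails to be $o(m)$, then the upper tail of the distribution contributes infinitely to the mean, contradicting finiteness; the argument for $F^{-1}(\tfrac{1}{2m})$ is symmetric. Write $Z$ for a random variable with CDF $F$. Suppose, for contradiction, that $F^{-1}(1-\tfrac{1}{2m})$ is not $o(m)$: then there is some $\varepsilon > 0$ and a sequence $m_1 < m_2 < \cdots$ of positive integers along which $F^{-1}(1-\tfrac{1}{2m_k}) \geq \varepsilon m_k$ for all $k$. By passing to a subsequence we may assume $m_{k+1} \geq 2 m_k$, so that the dyadic-type levels $1 - \tfrac{1}{2m_k}$ are well separated.

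The key step is to translate the quantile lower bound into a statement about tail mass. By definition of the generalised inverse $F^{-1}(\tau) = \inf\{z : F(z) \geq \tau\}$, the inequality $F^{-1}(1-\tfrac{1}{2m_k}) \geq \varepsilon m_k$ means that $F(z) < 1 - \tfrac{1}{2m_k}$ for all $z < \varepsilon m_k$, and hence $\mathbb{P}(Z \geq \varepsilon m_k) \geq \mathbb{P}(Z > z) = 1 - F(z) \geq \tfrac{1}{2m_k}$ for such $z$; taking $z \uparrow \varepsilon m_k$ gives $\mathbb{P}(Z \geq \varepsilon m_k) \geq \tfrac{1}{2 m_k}$ (using right-continuity of $F$, or simply $\mathbb{P}(Z > z) \geq \tfrac{1}{2m_k}$ for $z$ slightly below $\varepsilon m_k$, which suffices). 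Now I bound the contribution of the positive part of $Z$ to the mean from below. Consider the disjoint annuli $A_k = \{ \varepsilon m_k \leq Z < \varepsilon m_{k+1} \}$. On the event $\{Z \geq \varepsilon m_k\}$ we have $Z \geq \varepsilon m_k$, so
\begin{align*}
    \mathbb{E}[Z^+] \geq \sum_{k} \mathbb{E}[ Z \, \mathbbm{1}[A_k]] \geq \sum_k \varepsilon m_k \, \mathbb{P}(A_k) \, ,
\end{align*}
and since the events $\{Z \geq \varepsilon m_k\}$ are nested decreasing with $\mathbb{P}(Z \geq \varepsilon m_k) \geq \tfrac{1}{2m_k}$, a summation-by-parts (or direct telescoping) argument shows $\sum_k \varepsilon m_k \mathbb{P}(A_k)$ diverges: indeed $\sum_k \varepsilon m_k \mathbb{P}(A_k) = \sum_k \varepsilon m_k (\mathbb{P}(Z \geq \varepsilon m_k) - \mathbb{P}(Z \geq \varepsilon m_{k+1}))$, and reindexing gives a lower bound of order $\sum_k \varepsilon (m_k - m_{k-1}) \mathbb{P}(Z \geq \varepsilon m_k) \geq \sum_k \varepsilon (m_k - m_{k-1}) \tfrac{1}{2 m_k} \geq \sum_k \tfrac{\varepsilon}{4}$, using $m_{k+1} \geq 2m_k$ so that $m_k - m_{k-1} \geq m_k/2$. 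Hence $\mathbb{E}[Z^+] = \infty$, contradicting finiteness of the mean. This establishes $F^{-1}(1-\tfrac{1}{2m}) = o(m)$; applying the same reasoning to $-Z$ (whose CDF's upper quantiles are the lower quantiles of $Z$ with a sign flip) gives $F^{-1}(\tfrac{1}{2m}) = o(m)$.

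The main obstacle I anticipate is purely bookkeeping: being careful with the generalised inverse and with strict versus non-strict inequalities and one-sided limits (since $F$ need not be continuous), and choosing the right telescoping so that the divergent series appears cleanly. A cleaner alternative that avoids subsequence extraction is to argue directly from the tail-integral formula $\mathbb{E}[Z^+] = \int_0^\infty \mathbb{P}(Z > t)\, \mathrm dt < \infty$: finiteness forces $t\,\mathbb{P}(Z > t) \to 0$, equivalently $\mathbb{P}(Z > t) = o(1/t)$; feeding $t = F^{-1}(1-\tfrac{1}{2m}) - $ (a small amount) and using $\mathbb{P}(Z > t) \geq \tfrac{1}{2m}$ gives $\tfrac{1}{2m} = o(1/t)$, i.e. $t = o(m)$, which is exactly the claim. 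I would likely present this second route as the main proof and relegate the annulus computation to a remark, since it is shorter and sidesteps the need to thin out the sequence.
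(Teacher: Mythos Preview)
Your proposal is correct. Both routes you sketch are sound: in the annulus argument the ``reindexing'' step is justified by Fubini--Tonelli applied to the non-negative double sum $\sum_k m_k(p_k-p_{k+1}) = \sum_k \sum_{j\le k}(m_j-m_{j-1})(p_k-p_{k+1})$, which telescopes to $\sum_j (m_j-m_{j-1})p_j$; and the ``cleaner alternative'' via $t\,\mathbb{P}(Z>t)\to 0$ is the standard fact that a non-negative decreasing integrable function $g$ satisfies $t\,g(t)\le 2\int_{t/2}^t g(s)\,ds \to 0$.

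The paper's proof follows the same overall architecture---contradiction, extract a well-separated subsequence, show the positive part of the mean is infinite---but works in the dual picture. Instead of converting the quantile bound into a tail-probability bound and integrating $\mathbb{P}(Z>t)$ over $t$, the paper integrates $F^{-1}(\tau)$ over $\tau\in[F(0),1]$ and places rectangles $C_m=[1-\tfrac{1}{2m},1]\times[0,F^{-1}(1-\tfrac{1}{2m})]$ under that curve, each of area at least $c/2$; a subsequence is then chosen so that consecutive rectangles overlap by at most $c/4$, forcing the integral to diverge. Your approach and the paper's are essentially mirror images under the change of variables between the survival function and the quantile function. Your second route is genuinely shorter: by invoking $t\,\mathbb{P}(Z>t)\to 0$ directly, you avoid the subsequence extraction altogether and reduce the lemma to a one-line consequence of a classical tail fact. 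The paper's geometric rectangle argument is perhaps more visual but requires more bookkeeping to control the overlaps.
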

\begin{proof}
    We will prove the claim for $F^{-1}(1 - \tfrac{1}{2m})$; the proof for $F^{-1}(\tfrac{1}{2m})$ is analogous. First, note that for a random variable $Z$ with CDF $F$, we have
    \begin{align*}
        \mathbb{E}[Z \mathbbm{1}[Z > 0]] = \int_{F(0)}^1 F^{-1}(\tau) \; \mathrm{d}\tau \, .
    \end{align*}
    Suppose for a contradiction the growth bound on $F^{-1}(1 - \tfrac{1}{2m})$ does not hold. Then there exists $c>0$ such that for infinitely many $m$, $F^{-1}(1 - \tfrac{1}{2m}) > cm$. We will use this fact to lower-bound the value of the integral above. Note that for each such $m$, the rectangle $C_m = [1 - \tfrac{1}{2m}, 1] \times [0, F^{-1}(1-\tfrac{1}{2m})]$ lies between the curve to be integrated and the x-axis, and has area at least $c/2$. Now we pick a subsequence of these integers $m_1,m_2,\ldots$, with the property that the area of $C_{m_{k+1}} \cap C_{m_k}$ is less than $c/4$. Concretely, this can be achieved by taking $m_{k+1} > 2 c^{-1} F^{-1}(1 - \frac{1}{2m_k})$. We then obtain the bound
    \begin{align*}
        \int_{F(0)}^1 F^{-1}(\tau) \; \mathrm{d}\tau \geq \text{Area}(C_1) + \sum_{k=2}^\infty \text{Area}(C_{m_k} \setminus C_{m_{k-1}}) \geq c/2 + c/4 + c/4 + \cdots = \infty \, .
    \end{align*}
    However, this contradicts our initial assumption that the distribution has finite mean, and so we conclude that we must have $F^{-1}(1 - \tfrac{1}{2m}) = o(m)$, as required.
\end{proof}

\begin{proof}[Proof of Proposition~\ref{prop:qtd-finite-mean}]
    By Corollary~\ref{corr:value-bound} and Lemma~\ref{lem:cdf-bounds}, for $m$ sufficiently large we have
    \begin{align*}
        | V^{\text{QTD}}_m(x) - V^\pi(x) | \leq& \frac{1}{1-\gamma} \Big(\frac{o(m)}{2m(1-\gamma)} + \\
        & \ \mathbb{E}^\pi_x[ (R - F^{-1}(1 - \tfrac{1}{2m})) \mathbbm{1}[ R > F^{-1}(1 - \tfrac{1}{2m})]] -   \mathbb{E}^\pi_x[ (R - F^{-1}(\tfrac{1}{2m})) \mathbbm{1}[ R < F^{-1}(\tfrac{1}{2m}) ] ] \Big) \rightarrow 0 \, ,
    \end{align*}
    as required, where $F$ above is the CDF of the reward distribution at state $x$.
\end{proof}

\section{Further experimental details}\label{sec:further-experiment-details}

We provide full details for the experiments reported in the main paper.

\textbf{Environment details.}
\begin{itemize}
    \item Dense stochastic transition structure. We generate a transition matrix for a 20 state MDP by sampling each row of the transition matrix independently from a Dirichlet$(1,\ldots,1)$ distribution.
    \item Sparse transition structure. We generate a transition matrix for a 20 state MDP according to the Garnet protocol \citep{archibald1995generation}. Specifically, for each state, we independently sample 6 states (without replacement) to have non-zero transition probability, and allocate the transition probability uniformly across these states.
    \item Deterministic transition structure. We use a deterministic cycle transition structure over 10 states.
\end{itemize}

\textbf{Reward distributions.}
\begin{itemize}
    \item Deterministic distributions. At the time of generating the environment, we sample the reward at each state independently from a standard normal $N(0,1)$ distribution.
    \item Gaussian distributions. Mean rewards are sampled as above, and the distributions themselves are Gaussian with standard deviation 1.
    \item Exponential distributions. Mean rewards are sampled as above, and the distributions themselves are shifted Exponential(1) distributions with the specified means.
    \item $t$-distributions. Mean rewards are sampled as above, and the distributions themselves are shifted $t_2$-distributions with the specified means.
\end{itemize}

\textbf{Hyperparameters.} In all experiments, we use a default discount factor of $\gamma = 0.9$. For both TD and QTD methods, all predictions are initialised to 0.

\textbf{Mean-squared error measurement.} Each configuration was run 1,000 times, and the reported results are the mean-squared errors averaged over these 1,000 runs; error bars in the plots correspond to plus/minus two times the empirical standard error.

\textbf{Learning rates.} For TD, 40 learning rates are swept over the range $[5 \times 10^{-4}, 1]$, equally spaced in log-space. For QTD, 40 learning rates are swept over the range $[5 \times 10^{-3}, 10]$, equally spaced in log-space.

\section{Further experimental results}\label{sec:app:further-experiments}

Here, we report further comparisons to complement the results in the main paper.

\subsection{Optimal learning rate selection for Figure~\ref{fig:main-sweep-over-updates}}

To aid interpretation of Figure~\ref{fig:main-sweep-over-updates}, we display in Figure~\ref{fig:main-sweep-over-updates-lr} the optimal learning rates selected by TD and QTD for each number of updates. Here, the confidence bands indicate the range of learning rates for which the lower end of the MSE confidence region was smaller than the upper end of the confidence region for the actual chosen optimal learning rate.

\begin{figure}[h]
    \centering
    
    \includegraphics[width=.48\textwidth]{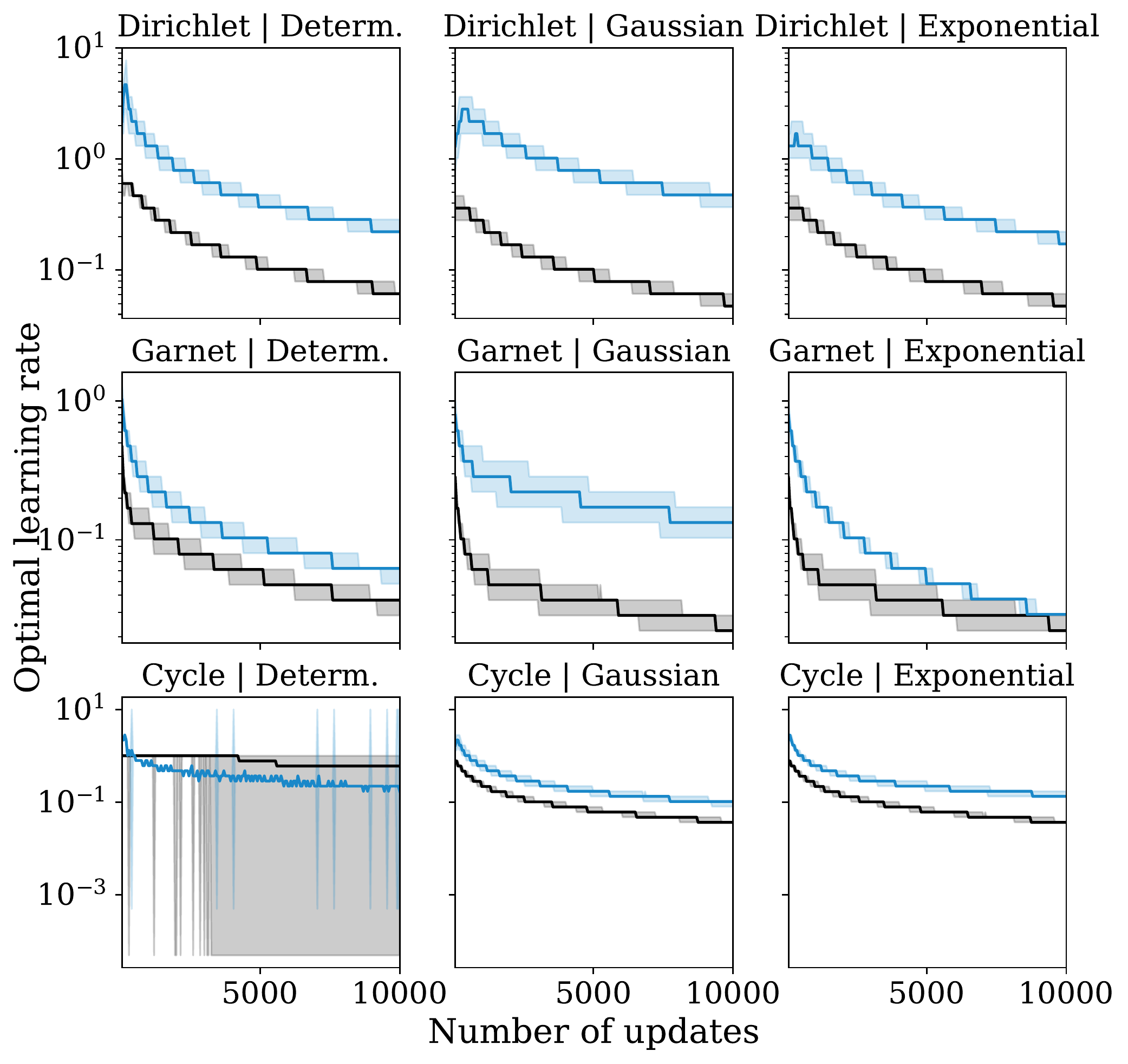}
    
    \caption{Optimal learning rate, as a function of number of updates, for both TD and QTD, for the results displayed in Figure~\ref{fig:main-sweep-over-updates}.}
    \label{fig:main-sweep-over-updates-lr}
\end{figure}

\subsection{Further results for $t_2$-distributed rewards}\label{sec:app:t}

To complement Figure~\ref{fig:t}, we plot mean-squared error against learning rates for both TD and QTD run with 1,000 updates in Figure~\ref{fig:t-lr}, in particular providing an indication of the typical magnitude of MSE that is attained by each method.

\begin{figure}[h]
    \centering
    
    \includegraphics[width=.6\textwidth]{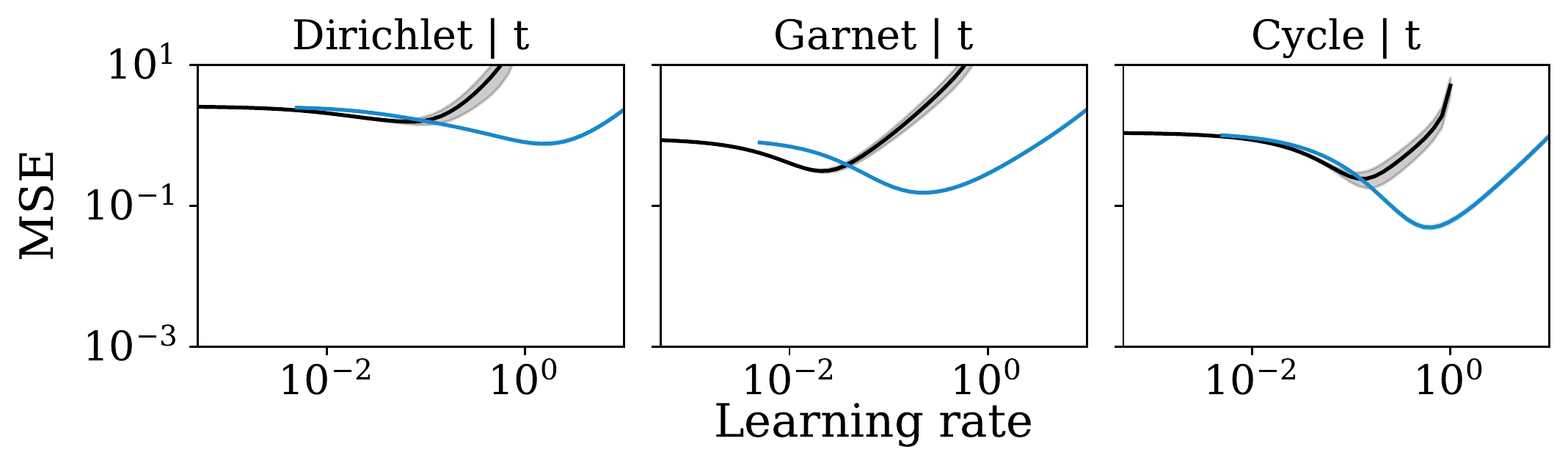}
    
    \caption{Mean-square error against learning rate for both TD and QTD run for 1,000 updates in environments with $t_2$-distributed rewards.}
    \label{fig:t-lr}
\end{figure}

\subsection{Varying numbers of quantiles}\label{sec:app:main-sweep-m}

We present results for the relative improvement of QTD(1) and QTD(16) over TD on the main suite of environments in Figures~\ref{fig:m1-sweep} and \ref{fig:main-sweep-m16}, respectively.
Unlike QTD(128), which is superior to TD on all stochastic environments in the suite, QTD(1) is outperformed by TD on several stochastic environments, showing that performance can be degraded when not using a sufficient number of quantiles.
Interestingly, there are also environments in which the performance of QTD(1) is not degraded relative to QTD(128). We note in particular that the performance of QTD(1) on the cycle Gaussian environment in fact improves over that of QTD(128). In fact, in environments with deterministic transitions and certain symmetric reward distributions, there is no fixed-point error in the value predictions of QTD(1); this is intuitively driven by the agreement of the median and the mean of the distributional Bellman targets in this environment, in contrast to e.g.\ the example in Figure~\ref{fig:low-m}. We note also the oscillations that appear in certain environments in Figure~\ref{fig:m1-sweep}; these artefacts are due to the discrete grid of learning rates that are swept over. Unlike QTD(1), QTD(16) dominates TD in the stochastic environments within the suite, and the performance is comparable with that of QTD(128), indicating the diminishing returns of increasing the number of quantiles to be estimated beyond a certain range.

\begin{figure}
    \centering
    \includegraphics[keepaspectratio,width=.49\textwidth,valign=c]{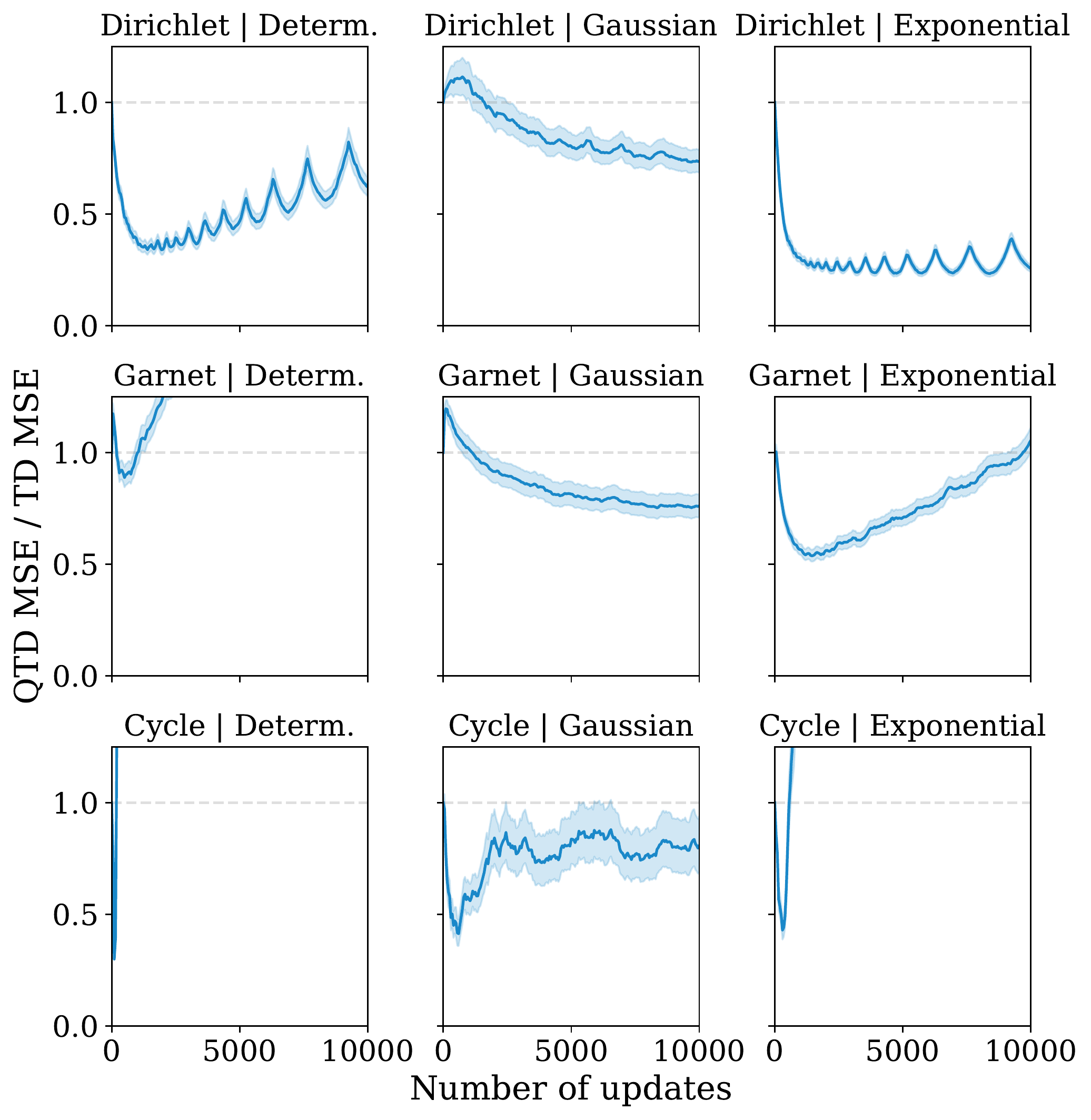}
    
    \caption{Relative improvement of QTD(1) over TD in mean-squared error against number of updates.}
    \label{fig:m1-sweep}
\end{figure}

\begin{figure}[h]
    \centering
    \includegraphics[keepaspectratio,width=.49\textwidth,valign=c]{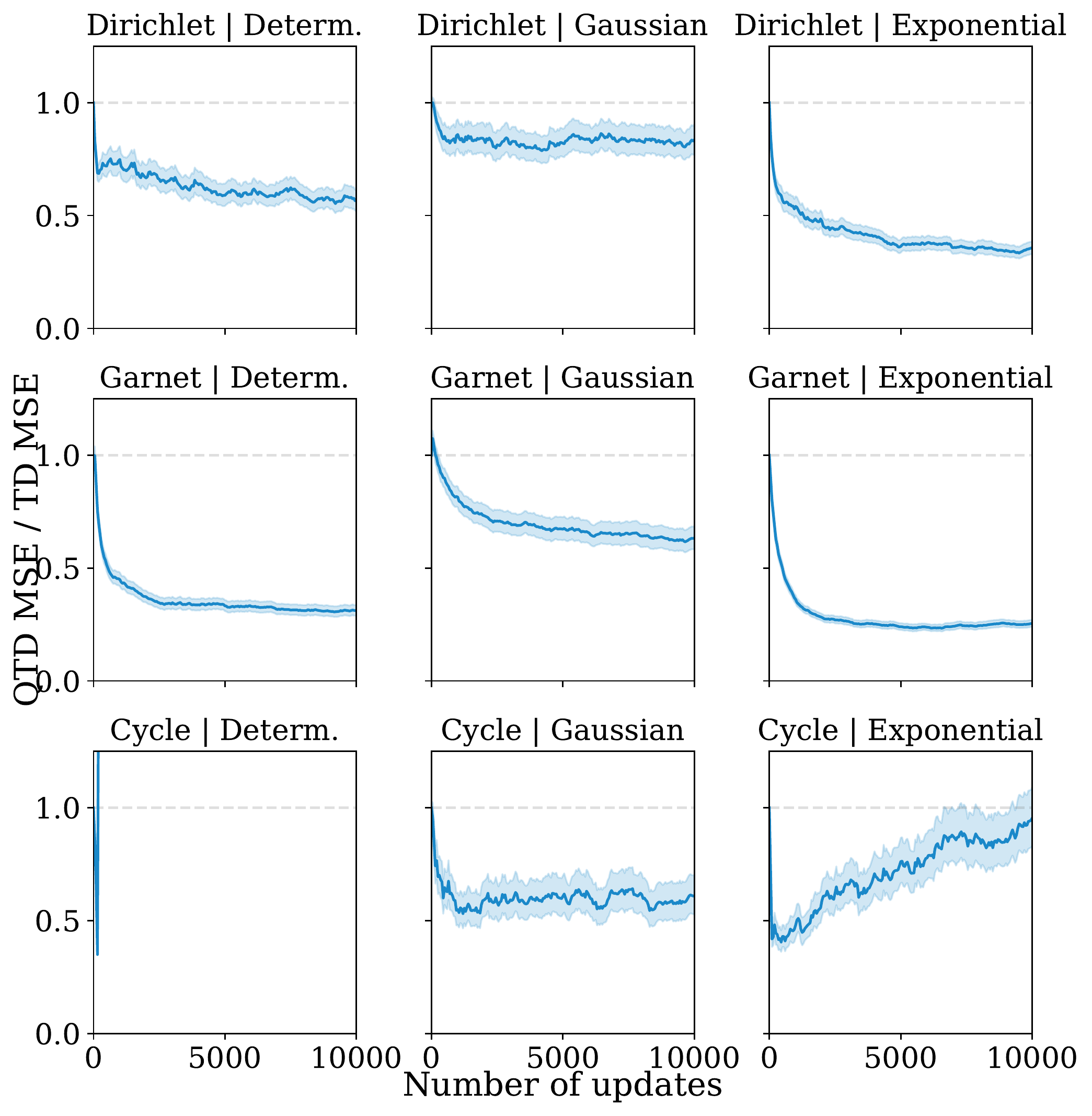}
    \caption{Relative improvement of QTD(16) over TD in mean-squared error against number of updates.}
    \label{fig:main-sweep-m16}
\end{figure}

\end{appendix}

\end{document}